%% file: paper.tex
\documentclass{article}

\usepackage{microtype}
\usepackage{graphicx}
\usepackage{subcaption}
\usepackage{booktabs} 

\usepackage{hyperref}

\usepackage[preprint]{icml2026}

\usepackage{amsmath}
\usepackage{amssymb}
\usepackage{mathtools}
\usepackage{amsthm}

\usepackage{bm}
\newcommand{\argmax}{\mathop{\rm argmax}\displaylimits}
\newcommand{\argmin}{\mathop{\rm argmin}\displaylimits}

\usepackage[capitalize,noabbrev]{cleveref}

\theoremstyle{plain}
\newtheorem{theorem}{Theorem}[section]

\newtheorem{lemma}[theorem]{Lemma}
\newtheorem{corollary}[theorem]{Corollary}
\theoremstyle{definition}

\theoremstyle{remark}

\usepackage{takeuchi_mcr}

\usepackage[textsize=tiny]{todonotes}

\icmltitlerunning{Safe Distributionally Robust Feature Selection under Covariate Shift}

\begin{document}

\twocolumn[
  \icmltitle{Safe Distributionally Robust Feature Selection under Covariate Shift}



  \icmlsetsymbol{equal}{*}

  \begin{icmlauthorlist}
    \icmlauthor{Hiroyuki Hanada}{nagoyau}
    \icmlauthor{Satoshi Akahane}{nagoyau}
    \icmlauthor{Noriaki Hashimoto}{riken}
    \icmlauthor{Shion Takeno}{nagoyau}
    \icmlauthor{Ichiro Takeuchi}{nagoyau,riken}
  \end{icmlauthorlist}

  \icmlaffiliation{nagoyau}{Graduate School of Engineering, Nagoya University, Nagoya, Japan}
  \icmlaffiliation{riken}{Center for Advanced Intelligence Project, RIKEN, Tokyo, Japan}

  \icmlcorrespondingauthor{Hiroyuki Hanada}{hanada.hiroyuki.i9@f.mail.nagoya-u.ac.jp}
  \icmlcorrespondingauthor{Ichiro Takeuchi}{takeuchi.ichiro.n6@f.mail.nagoya-u.ac.jp }

  \icmlkeywords{Feature reduction, Safe screening, Covariate shift, Distributionally robust optimization}

  \vskip 0.3in
]



\printAffiliationsAndNotice{}  

\begin{abstract}
\input{abstract}
\end{abstract}

\input{sec1}

\input{sec2}

\input{sec3}

\input{sec4}
\input{sec5}

%
%

\section*{Impact Statement}

This paper aims to advance existing machine learning methods and their potential applications.

We do not foresee any specific societal impacts that require separate discussion beyond general considerations for responsible use.
%
%
%

%
\bibliography{paper}
\bibliographystyle{icml2026}

\newpage
\appendix
\onecolumn

\input{app1}
\input{app2}

\end{document}

%% file: abstract.tex
In practical machine learning, the environments encountered during the model development and deployment phases often differ, especially when a model is used by many users in diverse settings.
Learning models that maintain reliable performance across plausible deployment environments is known as \emph{distributionally robust (DR) learning}.
In this work, we study the problem of \emph{distributionally robust feature selection (DRFS)}, with a particular focus on sparse sensing applications motivated by industrial needs.
In practical multi-sensor systems, a shared subset of sensors is typically selected prior to deployment based on performance evaluations using many available sensors.
At deployment, individual users may further adapt or fine-tune models to their specific environments.
When deployment environments differ from those anticipated during development, this strategy can result in systems lacking sensors required for optimal performance.
To address this issue, we propose \emph{safe-DRFS}, a novel approach that extends safe screening from conventional sparse modeling settings to a DR setting under covariate shift.
Our method identifies a feature subset that encompasses all subsets that may become optimal across a specified range of input distribution shifts, with finite-sample theoretical guarantees of no false feature elimination.

%% file: sec1.tex
\section{Introduction}
\label{sec:intro}
In practical machine learning, model development and deployment are distinct phases, and uncertainty often arises in the deployment phase due to variations in operating environments.
\emph{Distributionally robust (DR) learning}~\cite{mohajerin2018data} has therefore attracted significant attention as a framework for handling such uncertainty.
In its standard formulation, DR learning assumes that a model is fixed in the development phase and deployed without modification, with the goal of guaranteeing performance under the worst-case distribution within a prescribed uncertainty set.
More recently, advanced DR learning settings have been explored in which the development phase determines a model backbone—such as a representation or structure—while allowing users to fine-tune or retrain models in their deployment environments.
In this case, the goal is to construct a robust foundation that ensures satisfactory performance even under worst-case deployment scenarios accounting for such adaptation.

In this work, we study such an advanced DR learning setting that we term \emph{DR feature selection (DRFS)}.
In the development phase, a system developer selects a subset of sensors (features) from a large pool of candidates, which are then physically equipped in a multi-sensor system and fixed at release.
In the deployment phase, the released system is used by many individual users operating in different environments, each of whom fine-tunes or retrains a model by selecting useful sensors from the fixed subset.
When deployment environments are uncertain and vary across users, the sensor subset must therefore be constructed in a DR manner.
This problem is motivated by industrial needs, where developers must equip systems with all sensors that may be required under uncertain usage conditions while eliminating unnecessary ones to reduce hardware cost and system complexity.

\begin{figure*}[t]
 \centering
 \igr{1.00}{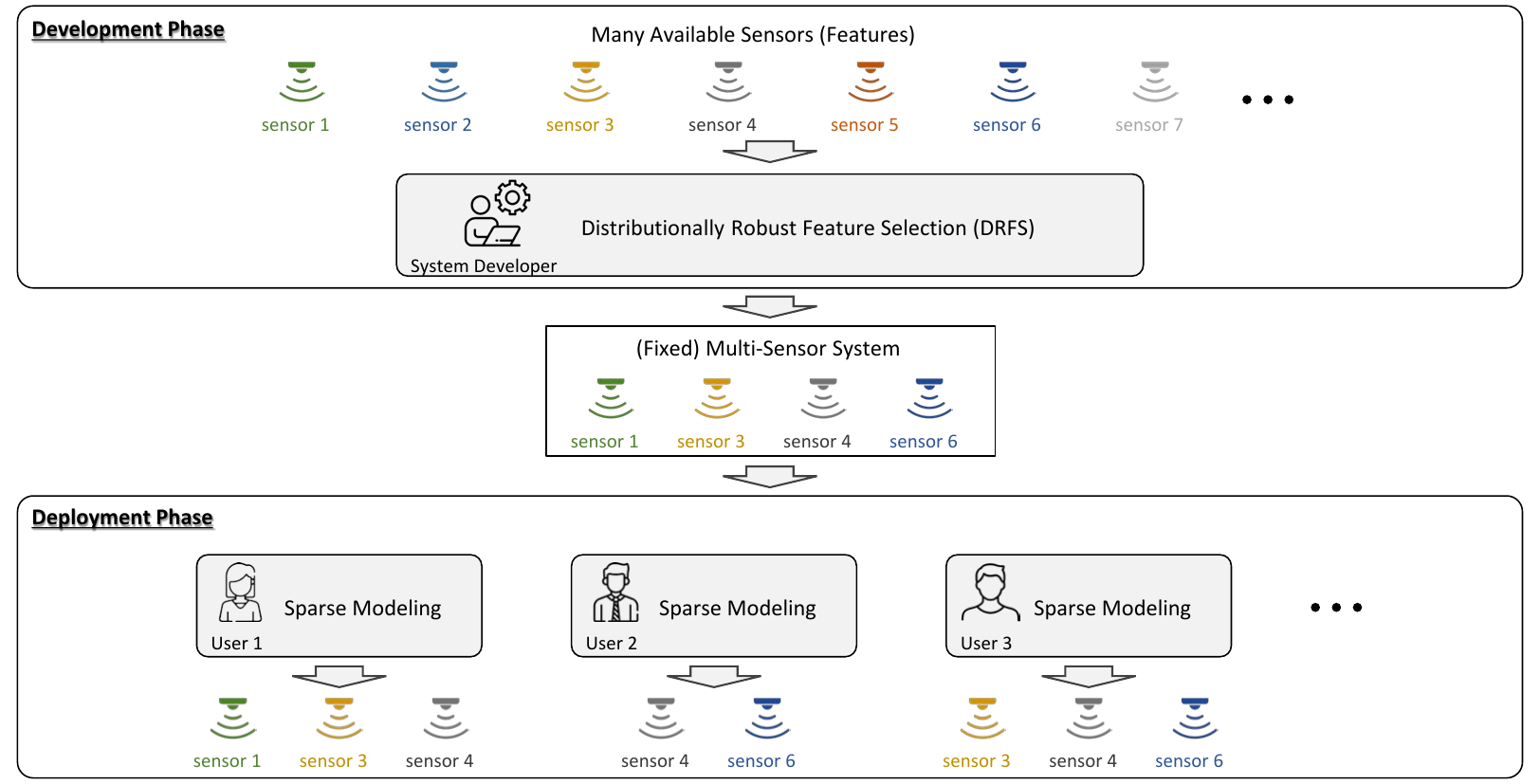}
 \caption{
This figure illustrates the sparse sensing problem studied in this work, which consists of two phases: a development phase (top) and a deployment phase (bottom).
In the development phase, a system developer selects a subset of sensors from a large pool of candidates and designs a system equipped with these sensors, after which the sensor set is fixed.
In the deployment phase, the system is used by many end users operating in diverse environments with uncertainty, modeled as covariate shift, where the input distribution varies across users within a specified range.
Each user adapts or fine-tunes a sparse model, such as a regression or classification model, using only the sensors available in the system.
Under covariate shift, the optimal support of sparse models may differ across users, leading to different sensor requirements.
The key challenge for the developer is therefore to select, during development, all sensors that may be required under plausible deployment environments while safely eliminating sensors that will never be used.
  }
  \label{fig:schematic_illustration}
\end{figure*}

As a specific instance of the DRFS problem, we consider sparse feature selection for regression and classification models under deployment environments subject to \emph{covariate shift}~\cite{shimodaira2000improving}.
Specifically, users operate in environments where the input distribution deviates from that observed during development.
In each deployment environment, users construct sparse models by selecting features from the fixed sensor set equipped in the system.
The key challenge for the system developer is therefore to determine, during the development phase, a sensor subset that can cover all features that may be selected across a range of plausible deployment environments.
Figure~1 illustrates the problem setting considered in this work.

Since the system developer does not know in advance the environments in which the system will be deployed, sensors must be selected by accounting for worst-case covariate shifts within a prescribed range.
Our key idea is to bring concepts originally developed in the context of \emph{safe screening}~\cite{elghaoui2012safe} into this DR learning setting.
Safe screening was originally developed to accelerate sparse optimization by identifying surely unnecessary features in the sense that their prediction coefficients are zero (called \emph{inactive}) and eliminating them without affecting optimality.
We extend these screening criteria to certify feature relevance across environment-dependent covariate shifts.
This extension enables the identification of sensors required under any plausible deployment environment and provides a complete finite-sample theoretical guarantee that no necessary features are ever discarded; we thus refer to the proposed approach as \emph{safe-DRFS}.

\subsection*{Related Works}

\paragraph{Distributionally Robust (DR) learning}
In the classical DR learning setup, a model is trained once in the development phase and deployed without further modification.
The goal is to optimize the worst-case expected loss over an uncertainty set of test distributions, guaranteeing reliable performance under the most adverse shift within the prescribed set~\cite{namkoong2016stochastic,shafieezadeh2015distributionally,mohajerin2018data,namkoong2017variance,sinha2018certifying}.
While these formulations typically consider general forms of distributional uncertainty, including shifts in both input and output distributions, a prominent line of work focuses specifically on uncertainty in the input distribution, commonly modeled as covariate shift~\cite{shimodaira2000improving,sugiyama2007covariate,kouw2019review}.

More recently, an advanced DR learning setting has emerged that assumes a two-stage workflow, where a developer learns a robust base during development and users adapt it at deployment via fine-tuning or test-time updates.
This perspective connects DR learning to test-time adaptation, personalization, and meta-learning, and has been explored in recent work allowing deployment-time adaptation under distribution shift~\cite{wang2021tent,sun2020test,zhou2021domain}.

A closely related recent work is \cite{swaroop2025distributionally}, which studies feature selection under distribution shift.
They propose a DR objective function and tractable solution methods based on continuous relaxations and heuristic optimization, but do not provide exact guarantees for the original robust problem.
In contrast, our approach focuses on \emph{safe certification} by deriving explicit, computable worst-case bounds that enable feature elimination with finite-sample guarantees.

\paragraph{Safe screening}
Safe screening techniques are designed to accelerate sparse learning by identifying features that are guaranteed to be inactive in the optimal solution and can therefore be safely removed without affecting optimality.
The first such rule was proposed by \citet{elghaoui2012safe}, followed by numerous extensions based on different optimality bounds~\cite{wang2013lasso,liu2014safe,wang2014safe,xiang2011screening}.
Among them, duality gap (DG)-based screening rules are particularly attractive due to their generality and ease of computation~\cite{fercoq2015mind,ndiaye2015gap,ndiaye2017gap}.

Beyond standard sparse feature selection, the safe screening principles have been extended and applied to a variety of problems, including sample screening~\cite{ogawa2013safe,shibagaki2016simultaneous}, pattern and rule mining~\cite{nakagawa2016safe,kato2023saferulefit}, model selection~\cite{okumura2015quick,hanada2018efficiently}, and finite-sample error bound computation~\cite{shibagaki2015regularization,ndiaye2019approximatehomotopy}.
However, to the best of our knowledge, safe screening principles have not previously been applied to DR learning.

\subsection*{Main Contributions}
This paper makes three main contributions.
First, we formulate, for the first time, a DR feature selection problem that explicitly accounts for deployment-time adaptation based on the safeness, that is, eliminating features that are assured not to affect the prediction model in the deployment phase.
Second, to achieve this, we newly extended the methodology of safe screening --- originally developed for accelerating sparse optimization --- to be applied to DR learning, and demonstrate their effectiveness for robust feature selection.
Finally, we derive computable worst-case bounds that enable DR feature selection with finite-sample theoretical guarantees of no false feature elimination, and validate our theoretical results through empirical evaluations.

%% file: sec2.tex
\section{Problem Setup}
\label{sec:problem_setup}
We consider a two-phase setting in multi-sensor systems.
In the development phase, a system developer selects a subset of sensors to be physically equipped in the system, which defines the features available for subsequent modeling.
In the deployment phase, end users operate the fixed system in their own environments and adapt sparse models using the available sensors.
Since deployment environments may vary across users, the system must include all sensors that could be required under plausible input distribution shifts.
Our goal is therefore to select, during development, a feature subset that remains sufficient across a specified range of deployment environments.

\paragraph{Development phase}
During the development phase, we are given a dataset denoted by $\{(\bm x_i, y_i)\}_{i \in [n]}$, where $\bm x_i \in \cX \subseteq \RR^d$ is a $d$-dimensional feature vector defined in a domain $\cX$ and $y_i \in \cY$ is the corresponding response variable.
We consider both regression and binary classification problems.
In the regression setting, the response space satisfies $\cY \subseteq \RR$, whereas in the binary classification setting, $\cY = \{-1, 1\}$.
Here, the notation $[n] = \{1, 2, \ldots, n\}$ denotes the set of natural numbers from $1$ to $n$.

We consider a linear model of the form
\begin{align*}
 f(\bm x_i; \bm b, b_0) = \bm x_i^\top \bm b + b_0, ~ i \in [n], 
\end{align*}
where $\bm b \in \RR^d$ is a vector of feature coefficients and $b_0 \in \RR$ is the intercept term.
Throughout the paper, we use the notation $\bm \beta := (\bm b, b_0)$ to express the feature coefficient vector and the intercept term collectively\footnote{
In what follows, we use the notation $\bm \beta$ to collectively represent a feature coefficient vector and an intercept term; for example, expressions such as $\bm \beta^*$ or $\hat{\bm \beta}$ correspond to the pairs $(\bm b^*, b_0^*)$ and $(\hat{\bm b}, \hat b_0)$, respectively.
}.

To induce sparsity in the model, we employ $L_1$-regularized estimation.
Specifically, for both regression and binary classification problems, we consider empirical risk minimization (ERM) with an $L_1$ regularization term, given by
\begin{align}
 \nonumber
 \bm \beta^*
 :=&
 \left(\bm b^*, b_0^*\right)
 \\
 \label{eq:erm}
 =&
 \argmin_{\bm b \in \RR^d,\, b_0 \in \RR}
 \sum_{i \in [n]}
 \ell\bigl(y_i, \bm x_i^\top \bm b + b_0 \bigr)
 + \lambda \|\bm b\|_1,
\end{align}
where $\ell(\cdot,\cdot)$ denotes a loss function and $\lambda > 0$ is a regularization parameter controlling the sparsity of the solution.
In the regression setting, $\ell$ corresponds to a regression loss such as the squared loss $\ell(y_i, t) := (t - y_i)^2$, while in the binary classification setting, $\ell$ represents a classification loss such as the logistic loss $\ell(y_i, t) := \log(1 + e^{-yt})$.
Due to the effect of sparse regularization, the optimal coefficient vector $\bm b^*$ is sparse.
We denote by $\cS(\bm \beta^*) \subseteq [d]$ the set of indices corresponding to the nonzero feature coefficients.

\paragraph{Deployment phase}
In the deployment phase, we consider a setting in which the input distribution induced by each user's operating environment differs from that of the development phase within a certain range.
Learning under such distributional discrepancies between development and deployment environments is commonly referred to as \emph{covariate shift}.
Under covariate shift, it is typically assumed that the conditional distribution of the response given the input remains unchanged, while the marginal input distribution varies across environments.
A standard approach to addressing covariate shift is to formulate the learning problem as a weighted ERM problem.
Specifically, let $P_{\rm dev}(\bm x)$ and $P_{\rm dep}(\bm x)$ denote the input distributions in the development and deployment environments, respectively.
For each training instance $(\bm x_i, y_i)$, we define a weight
\begin{align}
 \label{eq:weight}
 w_i = \frac{P_{\rm dep}(\bm x_i)}{P_{\rm dev}(\bm x_i)}, ~ i \in [n].
\end{align}
Assuming that the same dataset $\{(\bm x_i, y_i)\}_{i \in [n]}$ is available in the deployment phase, the weighted ERM problem is formulated using these weights as
\begin{align}
 \label{eq:weighted_erm}
 \bm \beta^{*(\bm w)} :=& \left(\bm b^{*(\bm w)}, b_0^{*(\bm w)}\right) \\
 \nonumber
 =& \argmin_{\bm b \in \RR^d, b_0 \in \RR}
 \underbrace{\sum_{i \in [n]} w_i \ell\bigl(y_i, \bm x_i^\top \bm b + b_0 \bigr) + \lambda \|\bm b\|_1}_{=: \cP^{(\bm w)}(\bm \beta)}, 
\end{align}
where $\bm w = (w_1, \ldots, w_n)^\top$.
Under the assumption of the covariate shift, the model trained with these weights is known to achieve consistency \cite{shimodaira2000improving,sugiyama2007covariate}.
The sparse solution obtained by the weighted ERM in \eq{eq:weighted_erm} has a set of nonzero coefficients $\cS(\bm \beta^{*(\bm w)})$ that does not necessarily coincide with $\cS(\bm \beta^*)$.

\paragraph{DR feature subset}
We consider a setting in which the input distributions at the deployment phase differ across users.
Namely, the deployment input distribution $P_{\rm dep}(\bm x)$ is allowed to vary within a specified range.
Under the covariate shift formulation, this corresponds to considering a range of possible importance weights defined in \eq{eq:weight}.
Let $\cW_\delta$ denote the set of all plausible weight vectors $\bm w$ induced by the considered range of deployment input distributions.

Noting that $w_i = 1$ for all $i \in [n]$ in the development phase, we specify the admissible set of weight vectors as
\begin{align}
 \label{eq:weight_range}
 \cW_\delta
 :=
 \left\{
 \bm w \in [1-\delta, 1+\delta]^n
 \left|
 \sum_{i \in [n]} w_i = n
 \right.
 \right\},
\end{align}
where $\delta \in (0,1)$ is a constant.
This allows each weight to vary within a range of $\pm\delta$ around its development-phase value while keeping the total weight $\sum_{i \in [n]} w_i$ fixed at $n$.

For each $\bm w \in \cW_\delta$, we can solve the corresponding weighted ERM problem and obtain an optimal sparse solution $\bm \beta^{*(\bm w)}$, whose support $\cS(\bm \beta^{*(\bm w)})$ represents the set of features selected under that particular deployment environment.
This means that the system must be equipped with all features that may become necessary for any plausible deployment environment.
This leads to selecting the feature set
\begin{align}
 \label{eq:set_star}
 \cS_{\rm DR} = \bigcup_{\bm w \in \cW_\delta} \cS(\bm \beta^{*(\bm w)}), 
\end{align}
which consists of all features that are selected by at least one optimal sparse model corresponding to a feasible deployment environment.
Equipping the system with $\cS_{\rm DR}$ ensures that, for any user whose operating environment falls within the specified range, the deployment-phase model can achieve optimal performance without being limited by missing features.

If the feature set $\cS_{\rm DR}$ could be identified exactly, a system developer can equip the system with these features (sensors).
However, $\cS_{\rm DR}$ is defined through the solutions of \eq{eq:weighted_erm} over infinitely many weight vectors $\bm w \in \cW_\delta$, making it impractical to compute directly.
In this work, rather than attempting to recover $\cS_{\rm DR}$ directly, we propose a method for identifying a superset $\hat{\cS}_{\rm DR} \supseteq \cS_{\rm DR}$.
By equipping the system with the features in $\hat{\cS}_{\rm DR}$ during the development phase, a system developer can guarantee that an optimal sparse model can be obtained for any deployment environment within the specified range.

%% file: sec3.tex
\section{Proposed Method: Safe Distributionally Robust Feature Selection (Safe-DRFS)}
\label{sec:proposed_method}
In this section, we describe our proposed method, called \emph{safe-DRFS}.
As discussed in Section~\ref{sec:problem_setup}, our goal is to enable a system developer to identify features (sensors) that may be required under uncertain deployment environments.
We will achieve the goal by identifying features guaranteed to be inactive (coefficients zero) across all plausible deployment environments.
Such features can be safely eliminated during development without affecting deployment-time optimality.
The proposed \emph{safe-DRFS} method is based on this perspective and identifies features that are guaranteed not to be selected by the weighted sparse learning problem in \eq{eq:weighted_erm} for any deployment environment.
By eliminating these features in advance, \emph{safe-DRFS} enables robust and cost-effective system design under uncertainty.

The proposed \emph{safe-DRFS} method is grounded in convex optimization theory.
When the loss function $\ell(\cdot,\cdot)$ in \eq{eq:erm} and \eq{eq:weighted_erm} is a convex function satisfying certain regularity conditions, the optimality conditions of sparse solutions explicitly characterize when individual feature coefficients are zero or nonzero; we refer to these as \emph{sparseness conditions}.
These conditions can be expressed as inequalities involving the optimal solution of the corresponding dual problem and can be evaluated once optimality is achieved.
The key idea of \emph{safe-DRFS} is to derive bounds on the sparseness conditions over a range of deployment-phase weight vectors, thereby identifying features whose coefficients are guaranteed to be zero for all $\bm w \in \cW_\delta$.

To derive bounds on the sparseness conditions, we build upon established results from the safe screening literature (see Section~\ref{sec:intro}).
Safe screening refers to a class of techniques for accelerating sparse optimization by identifying features whose coefficients are guaranteed to be zero at the optimum, either before optimization or during its execution.
The key principle underlying safe screening is the use of bounds on the sparseness conditions to eliminate inactive features without affecting optimality.
Among the various bounds developed for safe screening, \emph{safe-DRFS} takes the \emph{duality gap (DG) bound}~\cite{fercoq2015mind,ndiaye2015gap,ndiaye2017gap} as its base and extends it to accommodate uncertainty in deployment environments.

\subsection{Sparseness Conditions}
Here, we summarize the sparseness conditions associated with the weighted ERM problem in \eq{eq:weighted_erm}.
We refer to \eq{eq:weighted_erm} as the \emph{primal problem} and call its optimal solution $\bm \beta^{*(\bm w)}$ the \emph{primal solution}.

We assume that the loss function $\ell(\cdot,\cdot)$ in \eq{eq:weighted_erm} is closed and convex in its second argument, and is twice continuously differentiable with a bounded second derivative.
Under this assumption, the corresponding dual problem can be derived via Fenchel duality \cite{rockafellar1970convex} and is given by
\begin{align}
 \nonumber
 \bm \alpha^{*(\bm w)} = & \argmax_{\bm \alpha \in \RR^n} \underbrace{-\sum_{i \in [n]} w_i \ell^*(y_i, -\alpha_i)}_{=: \cD^{(\bm w)}(\bm \alpha)}
 \\
 \label{eq:dual}
 & \text{s.t.} ~ \left\| \sum_{i \in [n]} w_i \alpha_i \bm x_i \right\|_\infty \le \lambda,
 \sum_{i \in [n]} w_i \alpha_i = 0,
\end{align}
where $\ell^*(y, \cdot)$ denotes the Fenchel conjugate of $\ell(y,\cdot)$ with respect to its second argument: $\ell^*(y, t) := \sup_{s}(st - \ell(y, s))$.

Then, the optimality conditions of the primal--dual pair imply the following \emph{sparseness condition}: for each feature index $j \in [d]$,
\begin{align}
\label{eq:sparseness_condition}
\left|
\sum_{i \in [n]} w_i \alpha_i^{*(\bm w)} x_{ij}
\right|
<
\lambda
~\Rightarrow~
b^{*(\bm w)}_j = 0.
\end{align}
Namely, if the absolute inner product between the $j$-th feature and the optimal dual variable is strictly smaller than the regularization parameter $\lambda$, then the corresponding coefficient in the primal optimal solution must be zero.

\subsection{Safe Screening} \label{sec:safescreening}
The sparseness condition (left-hand side of \eq{eq:sparseness_condition}) is expressed in terms of the optimal dual solution $\bm \alpha^{*(\bm w)}$ and therefore cannot be directly evaluated before solving the optimization problem.
The core idea of safe screening is to determine feature sparsity by computing an upper bound on the left-hand side of \eq{eq:sparseness_condition} prior to obtaining the optimal solution $\bm \alpha^{*(\bm w)}$, or more specifically, by identifying a set that $\bm \alpha^{*(\bm w)}$ surely exists. Then we check whether this upper bound is smaller than $\lambda$.
In this work, to identify the set that $\bm \alpha^{*(\bm w)}$ surely exist,
we employ DG bound presented later as \eqref{eq:duality-gap-ball}.
Below, we formulate the DG bound in the context of our problem.

%
Let $\cP^{(\bm w)} : \RR^{d+1} \to \RR$ and $\cD^{(\bm w)} : \RR^n \to \RR$ be the ones defined in \eqref{eq:weighted_erm} and \eqref{eq:dual}, respectively.
For any primal feasible solution $\hat{\bm \beta} \in \RR^{d+1}$ and any dual feasible solution $\hat{\bm \alpha} \in \RR^n$, the DG is defined as
\begin{align*}
G^{(\bm w)}(\hat{\bm \beta}, \hat{\bm \alpha}) := \cP^{(\bm w)}(\hat{\bm \beta}) - \cD^{(\bm w)}(\hat{\bm \alpha}).
\end{align*}

The DG provides a bound on the distance between the dual feasible solution and the dual optimal solution.
In particular, under the assumptions stated above, using $w_i \ge 1-\delta$ for $\bm w \in \cW_\delta$, the dual optimal solution $\bm \alpha^{*(\bm w)}$ is guaranteed to lie in the Euclidean ball
\begin{align}
\bm \alpha^{*(\bm w)} \in \left\{ \bm \alpha \in \RR^n \middle| \|\bm \alpha - \hat{\bm \alpha}\|_2 \le \sqrt{\frac{2 \nu}{1 - \delta}  G^{(\bm w)}(\hat{\bm \beta}, \hat{\bm \alpha})} \right\},
\label{eq:duality-gap-ball}
\end{align}
where $\nu > 0$ is the maximum Lipschitz constant of the function $g(t) := \frac{d}{dt}\ell(y, t)$ for any $y\in\cY$. For example, $\nu = 2$ for the squared loss while $\nu = 1/4$ for the logistic loss\footnote{Note that, in case of the squared loss and the logistic loss, the choice of $y$ does not affect the Lipschitz constant of $g(t)$.}. See Appendix \ref{app:duality-gap-ball} for the proof.

Using this property, the left-hand side of the sparseness condition in \eq{eq:sparseness_condition} can be upper bounded as
\begin{align}
 \label{eq:upper_bound}
 \left|
\sum_{i \in [n]} w_i \alpha_i^{*(\bm w)} x_{ij}
\right|
\le
\mathrm{UB}_j^{(\bm w)}(\hat{\bm \alpha}, \hat{\bm \beta}),
\end{align}
where the upper bound ${\rm UB}_j^{(\bm w)}$ is presented in the following theorem:
\begin{theorem}
 \label{theo:upper_bound}
 We assume that the loss function $\ell(\cdot,\cdot)$ in \eq{eq:weighted_erm} is closed and convex in its second argument, and is twice continuously differentiable with respect to that argument, with a Lipschitz continuous derivative with constant $\nu$.
 Additionally, we assume that the weights satisfy $w_i > 0$ for all $i \in [n]$.
 Then, the upper bound in \eq{eq:upper_bound} is represented as 
 \begin{align*}
  \mathrm{UB}_j^{(\bm w)}(\hat{\bm \alpha}, \hat{\bm \beta})
  &= 
  \left|
  \sum_{i \in [n]} w_i \hat{\alpha}_i x_{ij}
  \right|
  \\
  &+
  \sqrt{
  \left(
  \sum_{i \in [n]} w_i^2 x_{ij}^2
  \right)
  \left(
  \frac{2 \nu}{1-\delta} G^{(\bm w)}(\hat{\bm \beta}, \hat{\bm \alpha})
  \right)
  }.
 \end{align*}
\end{theorem}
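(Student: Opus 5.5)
The plan is to take the duality gap ball \eqref{eq:duality-gap-ball} as given: the earlier discussion states that $\bm \alpha^{*(\bm w)}$ lies in the Euclidean ball centered at $\hat{\bm \alpha}$ with radius $r := \sqrt{\frac{2\nu}{1-\delta} G^{(\bm w)}(\hat{\bm \beta}, \hat{\bm \alpha})}$, with the proof deferred to Appendix~\ref{app:duality-gap-ball}. The bound then follows by maximizing the linear functional $\bm \alpha \mapsto \sum_{i \in [n]} w_i \alpha_i x_{ij}$ in absolute value over this ball. The assumptions of the theorem (convexity, twice differentiability, $\nu$-Lipschitz derivative, $w_i>0$) are exactly those under which the ball is valid. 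In particular, the $\nu$-smoothness of $\ell$ makes $\ell^*$ $(1/\nu)$-strongly convex. Positivity of the weights, together with $w_i \ge 1-\delta$, makes $\cD^{(\bm w)}$ $\frac{1-\delta}{\nu}$-strongly concave, which yields the stated radius.

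First I define the vector $\bm v_j := (w_1 x_{1j}, \ldots, w_n x_{nj})^\top \in \RR^n$. Then $\sum_{i} w_i \alpha_i^{*(\bm w)} x_{ij} = \bm v_j^\top \bm \alpha^{*(\bm w)}$. Next I decompose $\bm \alpha^{*(\bm w)} = \hat{\bm \alpha} + (\bm \alpha^{*(\bm w)} - \hat{\bm \alpha})$ and apply the triangle inequality:
\begin{align*}
\left| \bm v_j^\top \bm \alpha^{*(\bm w)} \right| \le \left| \bm v_j^\top \hat{\bm \alpha} \right| + \left| \bm v_j^\top (\bm \alpha^{*(\bm w)} - \hat{\bm \alpha}) \right|.
\end{align*}
The first term is exactly $\left|\sum_i w_i \hat\alpha_i x_{ij}\right|$. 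For the second term, the Cauchy--Schwarz inequality gives the bound $\|\bm v_j\|_2 \, \|\bm \alpha^{*(\bm w)} - \hat{\bm \alpha}\|_2 \le \sqrt{\sum_i w_i^2 x_{ij}^2}\; r$. Combining the two square roots into one yields precisely $\mathrm{UB}_j^{(\bm w)}(\hat{\bm \alpha}, \hat{\bm \beta})$.

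There is essentially no obstacle once \eqref{eq:duality-gap-ball} is granted; the argument is a two-line triangle-inequality and Cauchy--Schwarz computation. The only point to verify is that the ball applies to the stated $\hat{\bm \alpha}$. This needs $\hat{\bm \alpha}$ to be dual feasible for the $\bm w$-weighted problem, which is built into the definition of $G^{(\bm w)}$. It also needs the strong-concavity argument in the appendix to use $w_i \ge 1-\delta$, which holds for $\bm w \in \cW_\delta$.

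If the ball result had to be reproved, the main step would be the following. Since $\bm \alpha^{*(\bm w)}$ maximizes a $\mu$-strongly concave function with $\mu = (1-\delta)/\nu$ over a convex feasible set, we have $\cD^{(\bm w)}(\bm \alpha^{*(\bm w)}) - \cD^{(\bm w)}(\hat{\bm \alpha}) \ge \frac{\mu}{2}\|\hat{\bm \alpha} - \bm \alpha^{*(\bm w)}\|_2^2$. Weak duality then bounds the left-hand side by $\cP^{(\bm w)}(\hat{\bm \beta}) - \cD^{(\bm w)}(\hat{\bm \alpha}) = G^{(\bm w)}(\hat{\bm \beta}, \hat{\bm \alpha})$, which gives the stated radius.
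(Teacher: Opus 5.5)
Your proposal is correct and follows the paper's proof essentially verbatim. You split $\bm\alpha^{*(\bm w)} = \hat{\bm\alpha} + (\bm\alpha^{*(\bm w)} - \hat{\bm\alpha})$, apply the triangle inequality and then Cauchy--Schwarz with $\sqrt{\sum_{i} w_i^2 x_{ij}^2}$, and finish with the duality-gap ball \eqref{eq:duality-gap-ball}. Your sketch of the ball via constrained strong concavity with modulus $(1-\delta)/\nu$ plus weak duality matches the appendix argument, and is slightly more careful, since Lemma~\ref{lem:strong-convexity-sphere} is stated only for an unconstrained minimizer.
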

The proof of Theorem~\ref{theo:upper_bound} is presented in Appendix \ref{app:upper_bound}.
If $\mathrm{UB}_j^{(\bm w)}(\hat{\bm \alpha}, \hat{\bm \beta}) < \lambda$, then the corresponding feature is guaranteed to be inactive in the optimal solution for the given $\bm w$.

As the concrete choice of primal-dual feasible solutions $(\hat{\bm \beta}, \hat{\bm \alpha})$, we use 
\begin{align}
 \hat{\bm \beta} = \bm \beta^*,
 \text{ and }
 \hat{\bm \alpha} = q \bm w^{-1} \circ \bm \alpha^*,
 \label{eq:beta-alpha-hat-feasible}
\end{align}
where $\circ$ denotes the Hadamard (element-wise) product, and $q \in (0, 1]$ is loss-function-dependent scalar parameter uniquely determined so that $\hat{\bm \alpha}$ is dual feasible. For example, $q = 1$ for the squared loss while $q = 1 - \delta$ for the logistic loss. See Appendix~\ref{app:feasibility-alpha-hat} for details.

\subsection{Distributionally Robust Sparseness Conditions}

Our goal is to identify features that are guaranteed to never be selected under any plausible deployment environment.
To achieve this, for each feature index $j$, we consider the worst-case value of the upper bound over all admissible weight vectors $\bm w \in \cW_\delta$, defined as
\begin{align}
 \label{eq:upper_bound_all}
 \mathrm{UB}_j := \max_{\bm w \in \cW_\delta} \mathrm{UB}_j^{(\bm w)}(\hat{\bm \beta}, \hat{\bm \alpha}).
\end{align}
If this worst-case upper bound satisfies $\mathrm{UB}_j < \lambda$, then the corresponding feature is guaranteed to be inactive for all deployment environments within the specified range.
Unfortunately, directly computing $\mathrm{UB}_j$ is intractable in general.
Instead, we compute a further tractable upper bound of $\mathrm{UB}_j$ and use it as a criterion for a distributionally robust version of safe feature elimination during the development phase.

\begin{theorem}
 \label{theo:main}
Let
\begin{align*}
 \theta^{\langle j\rangle}_i &:= x_{ij}^2, \\
 \rho_i &:= \ell\bigl(y_i, \bm x_i^\top \bm b^* + b_0^*\bigr)
 \\
 &+ \max\left\{
 \ell^*\!\left(y_i, -\frac{q}{1+\delta}\alpha_i^* \right),
 \ell^*\!\left(y_i, -\frac{q}{1-\delta}\alpha_i^* \right)
 \right\}
\end{align*}
for $i \in [n]$.
Let $\theta^{\langle j\rangle}_{(1)} \le \cdots \le \theta^{\langle j\rangle}_{(n)}$ denote the elements of
$\{\theta^{\langle j\rangle}_i\}_{i \in [n]}$ sorted in nondecreasing order, and similarly,
let $\rho_{(1)} \le \cdots \le \rho_{(n)}$ denote the elements of
$\{\rho_i\}_{i \in [n]}$ sorted in nondecreasing order.
For $i \in [n]$, define
\begin{align}
 w_{(i)}^\#
 &=
 \mycase{
 1-\delta & (i \le \lfloor n/2 \rfloor), \\
 1        & (\lfloor n/2 \rfloor < i \le \lceil n/2 \rceil), \\
 1+\delta & (i > \lceil n/2 \rceil),
 }
 \label{eq:w-worst}
\end{align}
Then, defining
\begin{align*}
 &\hat{\mathrm{UB}}_j
 :=
 q
 \left|
 \sum_{i \in [n]} \alpha_i^* x_{ij}
 \right|
 \\
 &+
 \sqrt{
 \left(
 \sum_{i \in [n]} (w_i^\#)^2 \theta^{\langle j\rangle}_{(i)}
 \right)
 \left(
 \frac{2 \nu}{1 - \delta}
 \left(
 \sum_{i \in [n]} w_i^\# \rho_{(i)}
 + \lambda \| \bm b^* \|_1
 \right)
 \right)
 },
\end{align*}
we obtain a computable upper bound satisfying
$\mathrm{UB}_j \le \hat{\mathrm{UB}}_j$,
which provides a valid upper bound for $\mathrm{UB}_j$ in \eq{eq:upper_bound_all}.
This means that
 \begin{align*}
  \hat{\mathrm{UB}}_j < \lambda
  ~\Rightarrow~
  b_j^{*(\bm w)} = 0
  ~~~
  \forall \bm w \in \cW_\delta.
 \end{align*}
\end{theorem}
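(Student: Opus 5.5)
Throughout, fix $j$ and an arbitrary $\bm w \in \cW_\delta$. The plan is to bound each of the three ingredients of $\mathrm{UB}_j^{(\bm w)}(\hat{\bm\beta},\hat{\bm\alpha})$ from Theorem~\ref{theo:upper_bound} by a quantity that does not depend on $\bm w$. We use $\hat{\bm\beta}=\bm\beta^*$ and $\hat{\bm\alpha}=q\bm w^{-1}\circ\bm\alpha^*$ as in \eqref{eq:beta-alpha-hat-feasible}. Taking the maximum over $\bm w$ then gives $\mathrm{UB}_j\le\hat{\mathrm{UB}}_j$.

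\textbf{Step 1: the linear term.} Since $w_i\hat\alpha_i = q\alpha_i^*$, the first term equals $q|\sum_i \alpha_i^* x_{ij}|$ exactly, with no dependence on $\bm w$.

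\textbf{Step 2: the duality gap.} Write
\begin{align*}
G^{(\bm w)}(\bm\beta^*,\hat{\bm\alpha}) = \sum_i w_i\bigl[\ell(y_i,\bm x_i^\top\bm b^*+b_0^*)+\ell^*(y_i,-q\alpha_i^*/w_i)\bigr]+\lambda\|\bm b^*\|_1 .
\end{align*}
Because $\ell^*(y_i,\cdot)$ is convex, the map $w_i\mapsto \ell^*(y_i,-q\alpha_i^*/w_i)$ has maximum over $w_i\in[1-\delta,1+\delta]$ attained at an endpoint. Indeed, $1/w_i$ ranges over an interval, so the argument $-q\alpha_i^*/w_i$ ranges over the segment between $-q\alpha_i^*/(1+\delta)$ and $-q\alpha_i^*/(1-\delta)$, and a convex function on a segment is maximized at an endpoint. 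Hence the bracket is at most $\rho_i$, and since $w_i>0$,
\begin{align*}
G^{(\bm w)}\le\sum_i w_i\rho_i+\lambda\|\bm b^*\|_1 .
\end{align*}

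\textbf{Step 3: the two linear maximizations over $\cW_\delta$.} The remaining quantities are $\max_{\bm w\in\cW_\delta}\sum_i w_i\rho_i$ and $\max_{\bm w\in\cW_\delta}\sum_i w_i^2\theta^{\langle j\rangle}_i$. The two maxima are taken separately, which is valid because the product of the two maxima bounds the maximum of the product, and both factors are nonnegative. The latter needs $\rho_i\ge 0$ (at least after summation); this follows from Fenchel--Young, since $\ell(y,t)+\ell^*(y,s)\ge st$ applied at matching points gives nonnegativity of each gap term. I would verify this carefully, and otherwise argue that the $G$-bound is itself nonnegative.

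\textbf{Main obstacle: characterizing the two maximizers.} For the linear objective $\sum_i w_i\rho_i$ over the polytope $\cW_\delta$, a rearrangement/exchange argument shows the optimum puts $1+\delta$ on the largest $\rho$'s and $1-\delta$ on the smallest. The constraint $\sum_i w_i=n$ forces exactly $\lfloor n/2\rfloor$ of each, with a single weight equal to $1$ in the middle when $n$ is odd. This is precisely $w^\#$ matched with the sorted $\rho_{(i)}$. The exchange argument: if $\rho_a<\rho_b$ with $w_a>1-\delta$ and $w_b<1+\delta$, shifting mass from $a$ to $b$ keeps the sum fixed and does not decrease the objective.

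For $\sum_i w_i^2\theta_i$ the objective is convex in $\bm w$, so its maximum is attained at a vertex of $\cW_\delta$. The vertices are exactly the vectors with all coordinates at $1\pm\delta$ except at most one (balanced as above). Among these vertices the values $w_i^2$ are $(1-\delta)^2\le 1\le(1+\delta)^2$, and the same rearrangement inequality assigns the large weights to the large $\theta_{(i)}$. This yields $\sum_i (w_i^\#)^2\theta^{\langle j\rangle}_{(i)}$.

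Combining Steps 1--3 with monotonicity of the square root gives $\mathrm{UB}_j^{(\bm w)}\le\hat{\mathrm{UB}}_j$ for every $\bm w\in\cW_\delta$. The implication $\hat{\mathrm{UB}}_j<\lambda\Rightarrow b_j^{*(\bm w)}=0$ then follows from \eqref{eq:upper_bound} and \eqref{eq:sparseness_condition}.
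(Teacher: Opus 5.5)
Your proposal is correct and follows the paper's proof step for step. It substitutes $\hat{\bm\beta}=\bm\beta^*$ and $\hat{\bm\alpha}=q\bm w^{-1}\circ\bm\alpha^*$, bounds the $\ell^*$ term by its endpoint values using convexity, and maximizes $\sum_i w_i\rho_i$ and $\sum_i w_i^2\theta^{\langle j\rangle}_i$ separately over the vertices of $\cW_\delta$ by rearrangement, as in the paper's Lemma~\ref{lem:bounds-and-sum}; your exchange argument for the linear part is only a cosmetic variant.

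One claim needs fixing: Fenchel--Young gives only $\ell(y_i,t)+\ell^*(y_i,s)\ge st$, whose right-hand side can be negative, so per-term nonnegativity of $\rho_i$ fails already for the squared loss. Use your fallback instead: $\sum_i w_i\rho_i+\lambda\|\bm b^*\|_1\ge G^{(\bm w)}(\bm\beta^*,\hat{\bm\alpha})\ge 0$ by weak duality for every $\bm w\in\cW_\delta$. This is exactly what justifies replacing the maximum of the product by the product of the maxima, a step the paper leaves implicit.
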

The proof of Theorem~\ref{theo:main} is presented in Appendix \ref{app:proof-main}.

If $\hat{\mathrm{UB}}_j < \lambda$, then the $j$-th feature is guaranteed to be inactive for any $\bm w \in \cW_\delta$ during the deployment phase.
Therefore, such a feature (sensor) can be safely removed when releasing the system in the development phase without sacrificing optimality under the considered deployment environments.
The upper bound $\hat{\mathrm{UB}}_j$ given in Theorem~\ref{theo:main} can be computed in $\mathcal{O}(n \log n)$ time.
Consequently, evaluating this bound for all features $j \in [d]$ requires $\mathcal{O}(d n \log n)$ time in total.

%% file: sec4.tex
\section{Numerical Experiments}

\begin{table}[t]
\caption{Datasets for the experiments. ``LIBSVM'': LIBSVM Data \cite{libsvmDataset}, ``UCI'': UCI Machine Learning Repository \cite{Dua2017UCI}. See Appendix \ref{app:experiment-datasets} for the preprocesses conducted on them. (Values of $n$ and $d$ are the ones after the preprocesses.)}
\begin{center}
\label{tab:datasets}
\begin{tabular}{cccc}
\hline
Dataset & $n$ & $d$ & Source\\
\hline
\multicolumn{4}{c}{For regressions} \\
\hline
cpusmall & 8192 & 12 & LIBSVM \\
housing & 506 & 13 & LIBSVM \\
facebook\_comment & 40949 & 52 & UCI \\
triazines & 186 & 58 & LIBSVM \\
online\_news & 39644 & 58 & UCI \\
communities\_and\_crime & 1994 & 101 & UCI \\
superconductivity & 21263 & 158 & UCI \\
blog\_feedback & 52397 & 276 & UCI \\
\hline
\multicolumn{4}{c}{For binary classifications} \\
\hline
australian & 690 & 14 & LIBSVM \\
ionosphere & 351 & 33 & LIBSVM \\
sonar & 208 & 60 & LIBSVM \\
splice & 1000 & 60 & LIBSVM \\
mushrooms & 8124 & 111 & LIBSVM \\
a1a & 1605 & 113 & LIBSVM \\
madelon & 2000 & 500 & LIBSVM \\
colon\_cancer & 62 & 2000 & LIBSVM \\
\hline
\end{tabular}
\end{center}
\end{table}

\begin{figure*}[t]
\includegraphics[width=0.32\hsize]{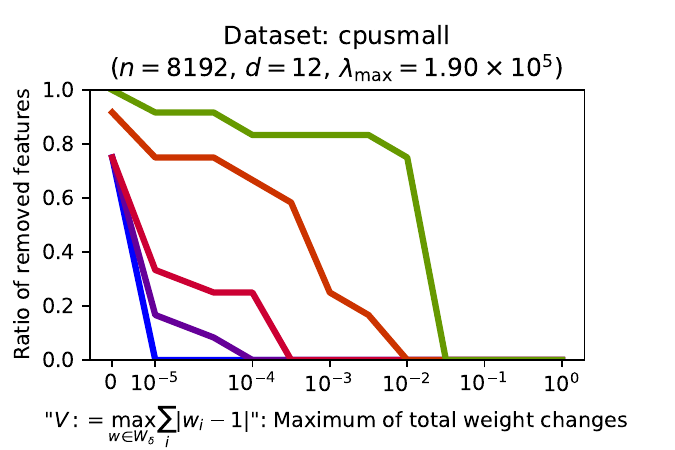}
\includegraphics[width=0.32\hsize]{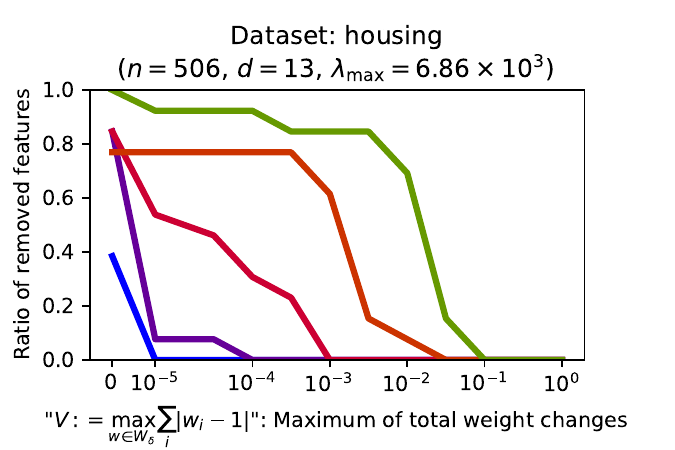}
\includegraphics[width=0.32\hsize]{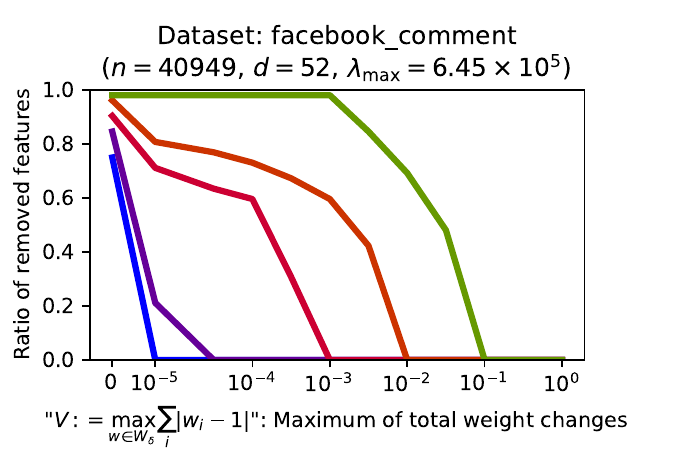}
\\
\includegraphics[width=0.32\hsize]{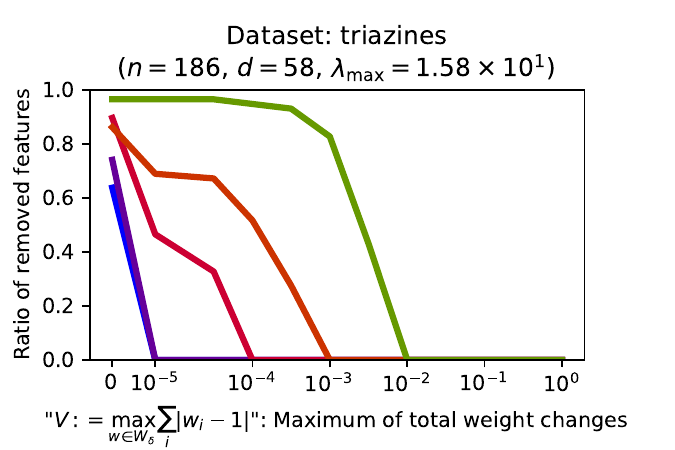}
\includegraphics[width=0.32\hsize]{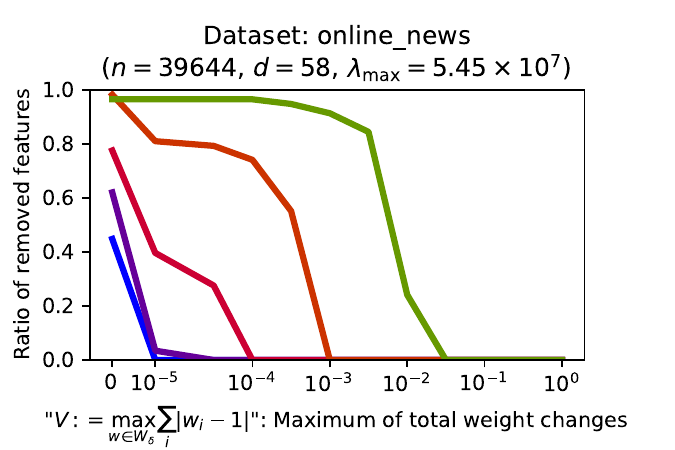}
\includegraphics[width=0.32\hsize]{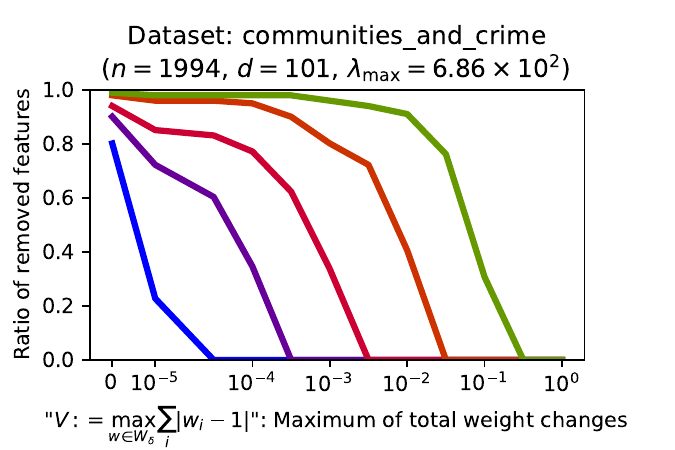}
\\
\includegraphics[width=0.32\hsize]{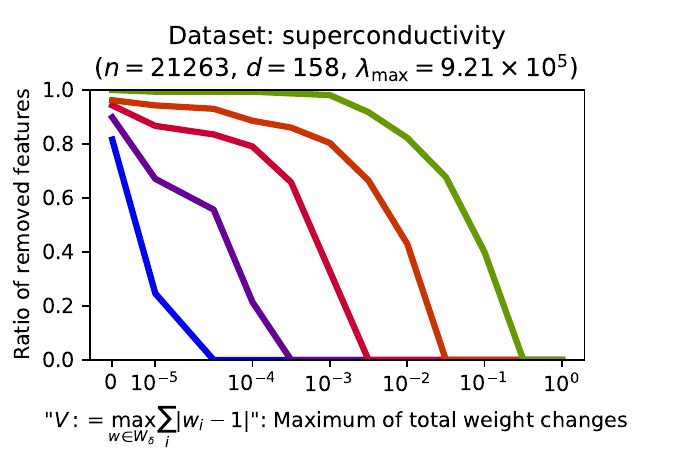}
\includegraphics[width=0.32\hsize]{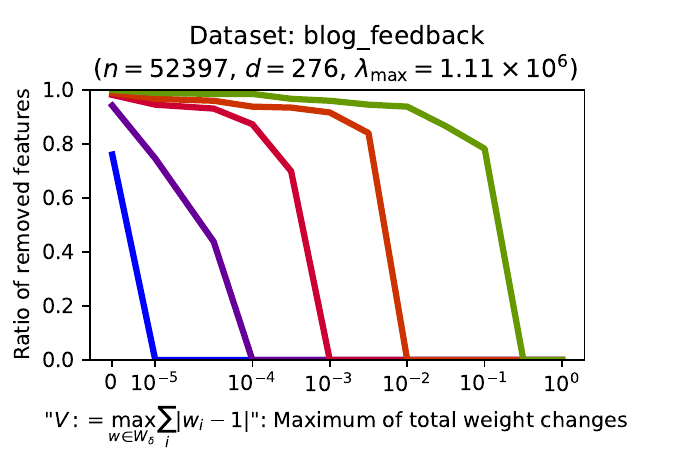}
\includegraphics[width=0.15\hsize]{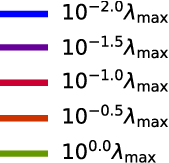}
\caption{Ratios of removed features to the original features for four regression datasets. Horizontal axis: The uncertainty levels of distributional change $V$, Vertical axis: Ratio of reduced features.}
\label{fig:exp-result-regression}
\end{figure*}

\begin{figure*}[t]
\includegraphics[width=0.32\hsize]{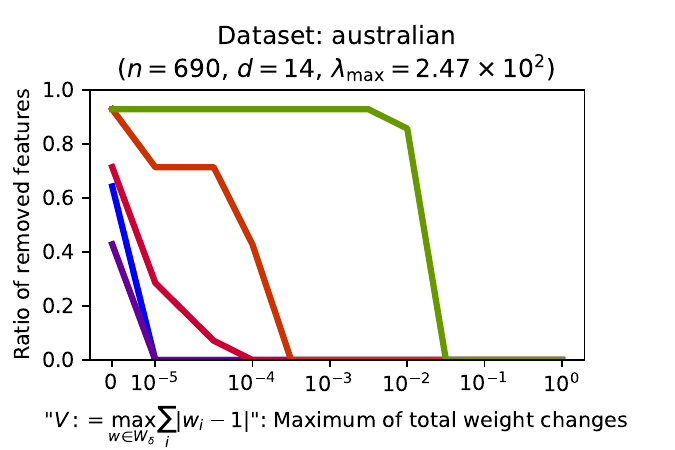}
\includegraphics[width=0.32\hsize]{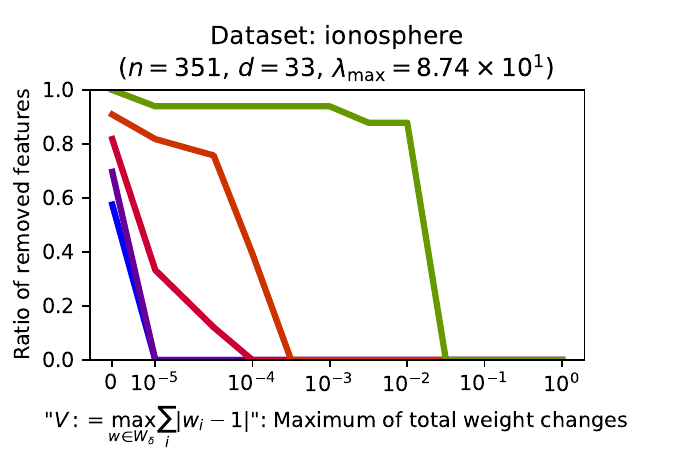}
\includegraphics[width=0.32\hsize]{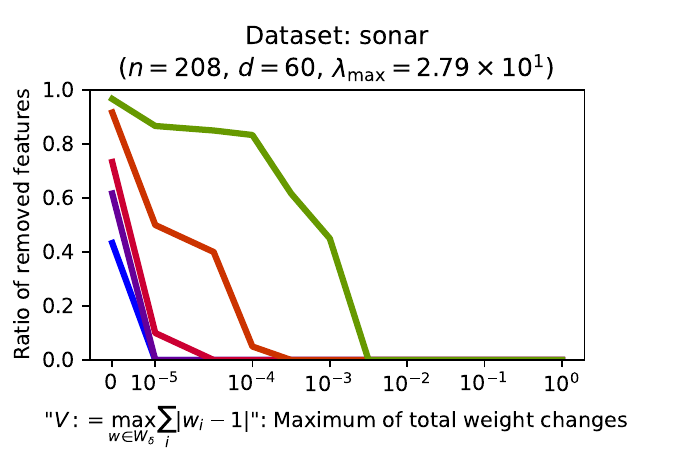}
\\
\includegraphics[width=0.32\hsize]{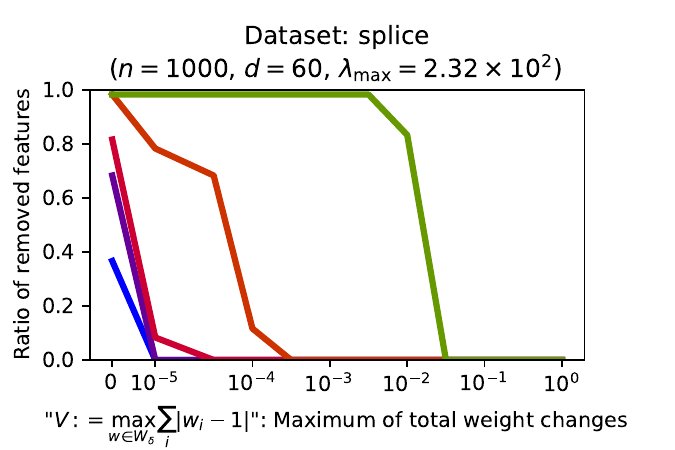}
\includegraphics[width=0.32\hsize]{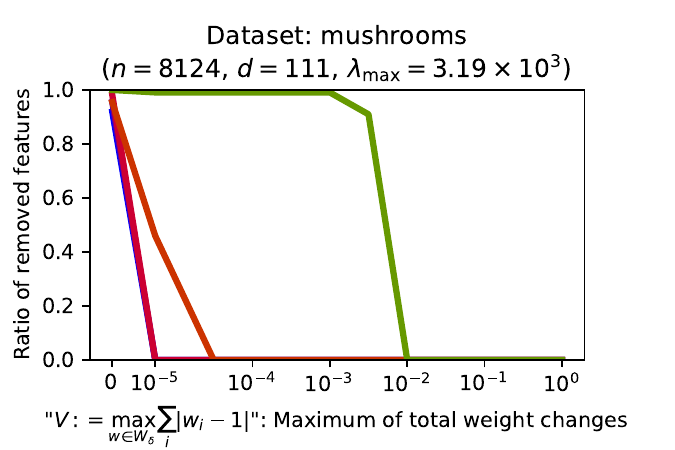}
\includegraphics[width=0.32\hsize]{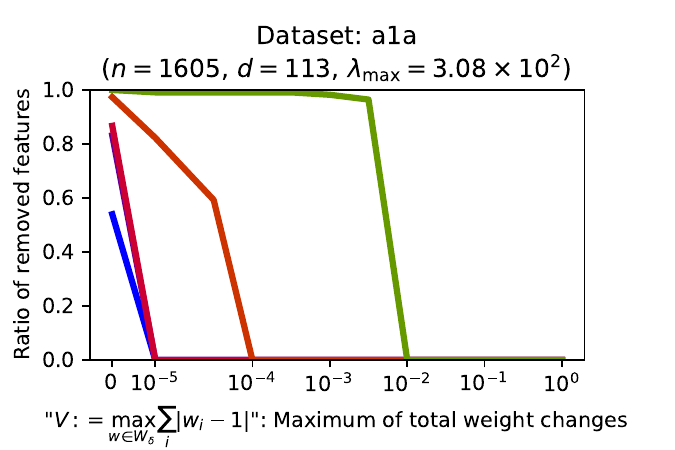}
\\
\includegraphics[width=0.32\hsize]{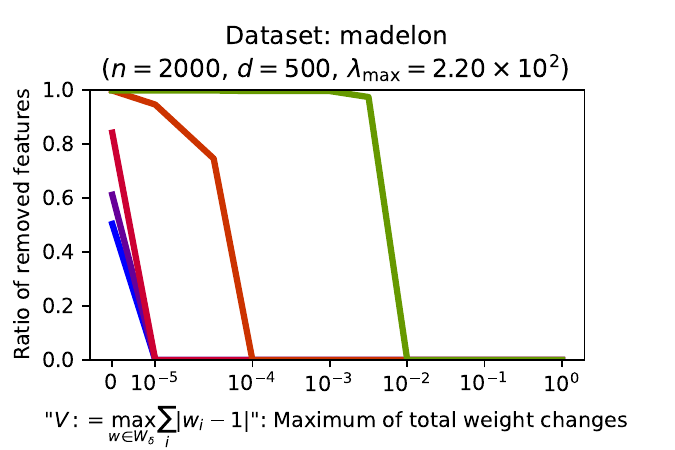}
\includegraphics[width=0.32\hsize]{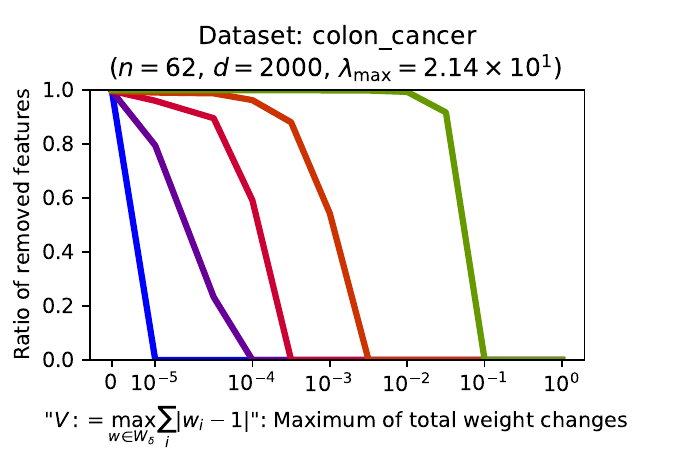}
\includegraphics[width=0.15\hsize]{exp_common_legend.eps}
\caption{Ratio of removed features for eight binary classification datasets to the original features. Horizontal axis: The uncertainty levels of distributional change $V$, Vertical axis: Ratio of reduced features.}
\label{fig:exp-result-classification}
\end{figure*}

We conducted feature reduction experiments on sixteen datasets listed in Table~\ref{tab:datasets}, consisting of eight regression and binary classification tasks each.
For binary classification, we employed the logistic loss, whereas for regression we used the squared loss.
Note that we do not evaluate the prediction performance, since the proposed method safely reduces features and therefore the prediction performance is assured to be unchanged.

To model possible distributional changes, we set $\delta$ in \eqref{eq:weight_range} so that $V := \max_{\bm w\in\cW_\delta} \sum_{i\in[n]} |w_i - 1|$ is $V \in \{0, 10^{-5}, 10^{-4.5}, 10^{-4}, \dots, 10^{0}\}$. We can calculate this as $V \approx n\delta$ (see Appendix~\ref{app:delta2variety} for details).
We also varied the regularization parameter $\lambda$ over the set $\{\lambda_\mathrm{max}, 10^{-0.5}\lambda_\mathrm{max}, 10^{-1}\lambda_\mathrm{max}, 10^{-1.5}\lambda_\mathrm{max}, 10^{-2}\lambda_\mathrm{max}\}$,
where $\lambda_\mathrm{max}$ denotes the smallest value of $\lambda$ for which $\bm b^*$ being a zero vector (i.e., all features are removed). The value of $\lambda_\mathrm{max}$ depends on the dataset; its computation is detailed in Appendix~\ref{app:lambdamax}.
%

The proposed procedures for identifying inactive features consist solely of simple matrix operations and were therefore implemented using NumPy in Python.
For model training we relied on standard libraries: for binary classification, we used LIBLINEAR~ \cite{liblinear}, and for regression, we employed the  ``linear\_model.Lasso'' implementation provided by scikit-learn~\cite{scikit-learn}.
%

As the results of the experiment, we report the ratios of removed features in Figures \ref{fig:exp-result-regression} and \ref{fig:exp-result-classification}.
In each figure, the horizontal axis represents the uncertainty levels of distributional change $V$, while the vertical axis shows the ratio of features that can be removed relative to the original number of features $d$.
A value closer to one indicates that a larger fraction of features can be safely eliminated.
From the figures, we observe that a larger number of features can be removed when the uncertainty levels of distributional change is small and the regularization parameter is large.

In real uses, we set $V$ as the desired level of distribution changes, and we need to keep many features (downward in each plot) as $V$ increases.
It is worth noting that, even when $\lambda_\mathrm{max}$ is used, some features may still be identified as necessary when $V$ is increased from zero. This is attributed to the fact that the regularization parameter $\lambda_\mathrm{max}^{(\bm w)}$ defined with respect to the post-change weights $\bm w$ can differ from $\lambda_\mathrm{max}$.

%% file: sec5.tex
\section{Conclusions and Future Works}
In this paper, we studied DR feature selection under deployment-time adaptation and proposed a method called \emph{safe-DRFS}.
Our method completely eliminates the risk of discarding features under the sparse model \eqref{eq:erm} and uncertain deployment environments.

As an important direction for future work, we plan to extend the framework to forms of distributional uncertainty beyond covariate shift. Under covariate shift, distributional uncertainty is represented through instance weights; however, extending this representation to more general settings remains an open problem. Moreover, even within the covariate shift setting, it would be valuable to consider alternative weight constraints \eqref{eq:weight_range} beyond the one employed in this work.

In addition, the use of regularization functions other than L1-regularization in \eqref{eq:weighted_erm} is an important extension. Since the upper bound used for the sparseness condition (Equation \eqref{eq:upper_bound_all} and Theorem \ref{theo:main}) depends inherently on the choice of the regularization function, it is desirable to develop a more general method that accommodates a broader class of regularization functions.

Finally, when employing a machine learning method that imposes a sparseness condition on instances (such as support vector machines), it is known that training instances can be safely reduced under a similar computational principle \cite{shibagaki2016simultaneous}. Extending our approach to instance reduction in such settings is another important direction for future work.

%% file: app1.tex
\section{Proofs for the Proposed Method}

For notational simplicity, let $\bm 1$ and $\bm 0$ be vectors of all ones and all zeros, respectively.

\subsection{Lemmas to be Used}

\begin{lemma} \label{lem:strong-convexity-sphere} \cite{ndiaye2015gap}
Let $f: \mathbb{R}^k\to\mathbb{R}$ be a $\mu$-strongly-convex function, that is, $f(\bm v) - (\mu/2)\|\bm v\|_2^2$ is convex.
Let $\bm v^* := \argmin_{\bm v\in\mathbb{R}^k} f(\bm v)$. 
hen, for any $\bm v\in\mathbb{R}^k$, $\|\bm v - \bm v^*\|_2 \leq \sqrt{\frac{2}{\mu}[f(\bm v) - f(\bm v^*)]}$.
\end{lemma}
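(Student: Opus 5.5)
The plan is to derive the quadratic growth inequality $f(\bm v) - f(\bm v^*) \ge \frac{\mu}{2}\|\bm v - \bm v^*\|_2^2$ for all $\bm v$, and then rearrange it. The only ingredients are the convexity of $h(\bm v) := f(\bm v) - (\mu/2)\|\bm v\|_2^2$ and the minimality of $\bm v^*$. I would work along a line segment, so that no subdifferential calculus is needed.

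\emph{Step 1 (a strengthened convexity inequality along a segment).} Fix $\bm v$ and set $\bm v_t := \bm v^* + t(\bm v - \bm v^*) = (1-t)\bm v^* + t\bm v$ for $t \in (0,1)$. Convexity of $h$ gives $h(\bm v_t) \le (1-t)h(\bm v^*) + t\,h(\bm v)$. Add $(\mu/2)\|\bm v_t\|_2^2$ to both sides and use the elementary identity
\begin{align*}
(1-t)\|\bm a\|_2^2 + t\|\bm c\|_2^2 - \|(1-t)\bm a + t\bm c\|_2^2 = t(1-t)\|\bm a - \bm c\|_2^2 .
\end{align*}
This yields
\begin{align*}
f(\bm v_t) \le (1-t) f(\bm v^*) + t f(\bm v) - \frac{\mu}{2} t(1-t)\|\bm v - \bm v^*\|_2^2 .
\end{align*}
Verifying this identity is the one step I would check carefully; expanding both sides confirms it.

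\emph{Step 2 (use optimality).} Since $\bm v^*$ minimizes $f$, we have $f(\bm v_t) \ge f(\bm v^*)$. Combining this with Step 1 and cancelling $f(\bm v^*)$ gives
\begin{align*}
t\bigl(f(\bm v) - f(\bm v^*)\bigr) \ge \frac{\mu}{2} t(1-t)\|\bm v - \bm v^*\|_2^2 .
\end{align*}
Dividing by $t > 0$ and letting $t \to 0^+$ yields $f(\bm v) - f(\bm v^*) \ge \frac{\mu}{2}\|\bm v - \bm v^*\|_2^2$.

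\emph{Step 3 (conclude).} Multiply the last inequality by $2/\mu > 0$. Its right-hand side $f(\bm v) - f(\bm v^*)$ is nonnegative, so we may take square roots of both sides, which gives exactly $\|\bm v - \bm v^*\|_2 \le \sqrt{\frac{2}{\mu}[f(\bm v) - f(\bm v^*)]}$.

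No step is a genuine obstacle; the argument uses only that a minimizer exists, which the statement presupposes. If one prefers a subgradient argument instead, the same result follows from $0 \in \partial f(\bm v^*) = \partial h(\bm v^*) + \mu \bm v^*$, which is valid because the quadratic term is differentiable and $f$ is finite on all of $\mathbb{R}^k$. One then applies the subgradient inequality for $h$ at $\bm v^*$ with subgradient $-\mu\bm v^*$ and completes the square.
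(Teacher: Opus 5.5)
Your proof is correct: the identity in Step 1 holds, and Steps 2--3 give the quadratic growth bound $f(\bm v)-f(\bm v^*)\ge\frac{\mu}{2}\|\bm v-\bm v^*\|_2^2$, from which the claim follows. The paper does not prove this lemma and only cites \cite{ndiaye2015gap}, which uses a first-order argument instead. There, strong concavity of the dual objective $\mathcal{D}$ at its maximizer $\hat{\bm\theta}$ gives $\mathcal{D}(\bm\theta)\le\mathcal{D}(\hat{\bm\theta})+\langle\nabla\mathcal{D}(\hat{\bm\theta}),\bm\theta-\hat{\bm\theta}\rangle-\frac{\mu}{2}\|\bm\theta-\hat{\bm\theta}\|_2^2$. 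The optimality condition $\langle\nabla\mathcal{D}(\hat{\bm\theta}),\bm\theta-\hat{\bm\theta}\rangle\le 0$ for every dual feasible $\bm\theta$ then removes the linear term. That route is a single line once you have differentiability at the optimum and the variational-inequality characterization of a constrained maximizer. Your segment argument needs neither: the exact identity and the limit $t\to0^+$ replace the derivative. Your route also suits how this paper uses the lemma. In the duality-gap-ball lemma it is applied to $f=-\mathcal{D}^{(\bm w)}$, which is minimized over the dual feasible set of \eqref{eq:dual} rather than all of $\mathbb{R}^n$, and which is $+\infty$ outside a box for the logistic loss. The statement as written, a finite function with an unconstrained minimizer, does not literally cover that use. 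Your Step 2 only needs $\bm v_t$ to stay feasible with finite value, and this follows from convexity of the feasible set and of $h$ whenever $f(\bm v)$ and $f(\bm v^*)$ are finite. So your proof covers the constrained, extended-valued case unchanged. Your subgradient alternative would need the indicator of the feasible set absorbed into $h$, i.e., a normal-cone term in the optimality condition, to do the same.
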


\begin{lemma}[``Rearrangement inequality''] \label{lem:rearrangement}
Let $\mathrm{sort}(\cdot)$ be the vector by sorting elements in ascending order.
For a vector $\bm s\in\mathbb{R}^n$, let $\mathcal{H}_{\bm s}$ be the set of all vectors whose elements are shuffled. Then, for any vector $\bm h\in\mathbb{R}^n$,
\begin{align*}
\max_{\bm s^\prime\in\mathcal{H}_{\bm s}} \bm h^\top \bm s^\prime = \mathrm{sort}(\bm h)^\top\mathrm{sort}(\bm s).
\end{align*}
\end{lemma}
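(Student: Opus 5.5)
The plan is a classical exchange (swap) argument, which yields the maximizer explicitly. Note first that $\mathcal{H}_{\bm s}$ is finite: it consists of the vectors $\bm s_\pi := (s_{\pi(1)},\dots,s_{\pi(n)})$ for permutations $\pi$ of $[n]$. Hence the maximum exists. I would prove two inequalities.

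\textbf{Attainability ($\ge$).} Let $\sigma$ be a permutation with $h_{\sigma(1)}\le\cdots\le h_{\sigma(n)}$, so that $\mathrm{sort}(\bm h)_k = h_{\sigma(k)}$. Define $\bm s'\in\mathcal{H}_{\bm s}$ by placing the $k$-th smallest element of $\bm s$ at position $\sigma(k)$, i.e., $s'_{\sigma(k)} := \mathrm{sort}(\bm s)_k$. Reindexing the sum by $k$ gives
\begin{align*}
\bm h^\top\bm s' = \sum_{k\in[n]} h_{\sigma(k)}\,\mathrm{sort}(\bm s)_k = \mathrm{sort}(\bm h)^\top\mathrm{sort}(\bm s),
\end{align*}
so the maximum is at least the right-hand side.

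\textbf{Optimality ($\le$).} Take any $\bm s'\in\mathcal{H}_{\bm s}$. Call a pair $(a,b)$ an \emph{inversion} if $h_a<h_b$ but $s'_a>s'_b$. Swapping $s'_a$ and $s'_b$ produces another element of $\mathcal{H}_{\bm s}$, and the objective changes by
\begin{align*}
(h_a s'_b + h_b s'_a) - (h_a s'_a + h_b s'_b) = (h_b-h_a)(s'_a-s'_b) > 0 .
\end{align*}
I would measure progress by the number of pairs $(a,b)$ with $h_a<h_b$ and $s'_a>s'_b$. When the chosen inversion is adjacent in the $\sigma$-order, as in bubble sort, this count strictly decreases. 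Since the count is finite, we reach an $\bm s''$ with $\bm h^\top\bm s''\ge \bm h^\top\bm s'$ and no inversions. For such $\bm s''$, whenever $h_a<h_b$ we have $s''_a\le s''_b$.

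\textbf{Main obstacle: ties in $\bm h$.} It remains to show that every inversion-free $\bm s''$ attains exactly $\mathrm{sort}(\bm h)^\top\mathrm{sort}(\bm s)$. Group the indices into blocks on which $h$ is constant, ordered by increasing $h$-value. Absence of inversions means every entry of $\bm s''$ in a lower block is at most every entry in a higher block. Consequently, the multiset of $\bm s''$-values on the $m$-th block equals the corresponding consecutive segment of $\mathrm{sort}(\bm s)$. The contribution of each block is its common $h$-value times the sum of its $s''$-values, and this does not depend on the order within the block. Therefore $\bm h^\top\bm s''$ coincides with the value computed for the explicit arrangement in the first step.

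Combining the two directions proves the lemma.
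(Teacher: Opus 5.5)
The paper gives no proof of this lemma; it simply invokes the classical rearrangement inequality. Your exchange argument is the standard proof, so it supplies a self-contained justification rather than an alternative route. It is correct in substance. The attainability construction $s'_{\sigma(k)} := \mathrm{sort}(\bm s)_k$ is right, and so is the swap gain $(h_b-h_a)(s'_a-s'_b)>0$. Your block argument for ties in $\bm h$ is also right: an inversion-free vector must carry, on each tie block, exactly the matching consecutive segment of $\mathrm{sort}(\bm s)$. That is precisely the care the sorted right-hand side requires.

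One step needs a small repair. With $\sigma$ fixed, an inversion need not be adjacent in the $\sigma$-order when $\bm h$ has ties. Take $\bm h=(1,2,2)$, $\bm s'=(3,4,1)$, $\sigma=\mathrm{id}$. The only inversion is the pair $(1,3)$. The adjacent pairs $(1,2)$ and $(2,3)$ are not inversions; the latter is a tie. So a bubble-sort procedure restricted to adjacent swaps stalls at value $13$, below $\mathrm{sort}(\bm h)^\top\mathrm{sort}(\bm s)=15$.

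The fix is to allow swapping any inversion. Termination then follows from the strict increase of $\bm h^\top\bm s'$ and the finiteness of $\mathcal{H}_{\bm s}$, since no vector can recur; you no longer need the inversion count. (Any inversion swap does in fact strictly lower the count, by a short case check on a third index $c$ according to where $h_c$ lies relative to $h_a<h_b$, but you do not need this.)

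More directly still, take a maximizer over the finite set $\mathcal{H}_{\bm s}$. It has no inversion, because a swap would strictly improve it. By your block argument its value is therefore $\mathrm{sort}(\bm h)^\top\mathrm{sort}(\bm s)$, which proves the equality in one step and makes the separate attainability direction optional.
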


\begin{lemma}[Corollary 32.3.4 of \cite{rockafellar1970convex}] \label{lem:maximize-in-polyhedron}
Given a convex function $f: \mathbb{R}^n\to\mathbb{R}$ and a closed convex polytope $\mathfrak{P}\subseteq\mathbb{R}^n$, the equation $\max_{\bm w\in\mathfrak{P}} f(\bm w) = \max_{\bm w\in\check{\mathfrak{P}}} f(\bm w)$ holds, where $\check{\mathfrak{P}}$ is the set of extreme points (i.e., corners) of the polytope.
\end{lemma}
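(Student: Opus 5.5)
The plan is to prove the two inequalities separately; the reverse inequality is immediate. Since every extreme point of $\mathfrak{P}$ belongs to $\mathfrak{P}$, we have $\check{\mathfrak{P}}\subseteq\mathfrak{P}$, and hence $\max_{\bm w\in\check{\mathfrak{P}}} f(\bm w)\le \sup_{\bm w\in\mathfrak{P}} f(\bm w)$. The substance is the forward inequality: no point of $\mathfrak{P}$ gives a larger value than the best extreme point.

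For the forward direction I would use the finite-dimensional representation theorem for bounded polyhedra (Minkowski's theorem, or Carath\'eodory combined with Krein--Milman). A closed convex polytope $\mathfrak{P}$, being bounded, has finitely many extreme points $\bm v^{(1)},\dots,\bm v^{(m)}$, and it equals their convex hull. Hence any $\bm w\in\mathfrak{P}$ can be written as
\begin{align*}
\bm w=\sum_{k=1}^m \gamma_k \bm v^{(k)}, \qquad \gamma_k\ge 0, \quad \sum_{k=1}^m \gamma_k=1 .
\end{align*}
Applying the convexity of $f$ (Jensen's inequality for finite convex combinations) then gives
\begin{align*}
f(\bm w)\le \sum_{k=1}^m \gamma_k f(\bm v^{(k)}) \le \max_{k\in[m]} f(\bm v^{(k)}).
\end{align*}
Taking the supremum over $\bm w\in\mathfrak{P}$ yields $\sup_{\mathfrak{P}} f\le \max_{\check{\mathfrak{P}}} f$.

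Combining the two inequalities, the supremum over $\mathfrak{P}$ is attained, namely at the maximizing extreme point. So writing $\max$ on both sides is justified; no separate continuity or compactness argument is needed, because $\check{\mathfrak{P}}$ is finite.

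The only nontrivial ingredient, and the step I expect to be the main obstacle, is the representation $\mathfrak{P}=\mathrm{conv}(\check{\mathfrak{P}})$ with $\check{\mathfrak{P}}$ finite. This is where boundedness is essential: for an unbounded polyhedron the statement can fail, e.g.\ a linear $f$ on a half-line. I would simply cite this standard fact rather than reprove it. If a self-contained argument were wanted, I would argue by induction on dimension. Take a point of $\mathfrak{P}$ that is not extreme, and move along a line through it inside $\mathfrak{P}$ until hitting a proper face, on both sides. The point is then a convex combination of two points of lower-dimensional faces, and the extreme points of a face are extreme points of $\mathfrak{P}$. In the paper's application, $\mathfrak{P}=\cW_\delta$ is the intersection of the box $[1-\delta,1+\delta]^n$ with a hyperplane, which is manifestly bounded, so the hypothesis is met.
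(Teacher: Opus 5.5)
Your argument is correct. You write the bounded polyhedron $\mathfrak{P}$ as the convex hull of its finitely many extreme points and apply Jensen's inequality to get $\sup_{\mathfrak{P}} f \le \max_{\check{\mathfrak{P}}} f$, the reverse inequality is immediate, and so the maximum is attained at a corner. The paper gives no proof of its own and simply cites Rockafellar, whose argument is this same convex-hull-plus-convexity reasoning, so your route coincides with the cited one; the only unstated hypothesis, $\mathfrak{P}\neq\emptyset$, holds trivially in the application since $\bm 1\in\cW_\delta$.
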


\subsection{Possible Region of Dual Solution after Weight Changes using Duality Gap (Inequality \eqref{eq:duality-gap-ball})} \label{app:duality-gap-ball}

In this proof we first show the generalized case of inequality \eqref{eq:duality-gap-ball}, then \eqref{eq:duality-gap-ball} itself.

\begin{lemma}
In the primal problem \eqref{eq:weighted_erm} (Section \ref{sec:problem_setup}), suppose that $\ell(y, t)$ is differentiable with respect to $t$, and $\frac{d}{dt}\ell(y, t)$ is $\nu$-Lipschitz continuous with respect to $t$. Then, for any feasible $\hat{\bm\beta}$ and $\hat{\bm\alpha}$, the following is assured:
\begin{align*}
& \bm\alpha^{*(\bm w)} \in \left\{\bm\alpha \mid \|\bm\alpha - \hat{\bm\alpha}\|_2 \leq \sqrt{ 2\frac{\nu}{\min_{i\in[n]} w_i} G^{(\bm w)}(\hat{\bm\beta}, \hat{\bm\alpha})}\right\}.
\end{align*}
Especially, if $\bm w$ must satisfy \eqref{eq:weight_range}, the following is also assured:
\begin{align*}
& \bm\alpha^{*(\bm w)} \in \left\{\bm\alpha \mid \|\bm\alpha - \hat{\bm\alpha}\|_2 \leq \sqrt{ 2\frac{\nu}{1 - \delta} G^{(\bm w)}(\hat{\bm\beta}, \hat{\bm\alpha})}\right\}.
\tag{\eqref{eq:duality-gap-ball} restated}
\end{align*}
\end{lemma}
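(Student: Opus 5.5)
The plan is to apply Lemma~\ref{lem:strong-convexity-sphere} to the negated dual objective $-\cD^{(\bm w)}$, restricted to the dual feasible set. The restriction is handled by adding the indicator function of that set, which is closed and convex, so strong convexity is preserved. The argument then has three parts: a strong convexity modulus for $-\cD^{(\bm w)}$, a first-order optimality step, and weak duality to replace the suboptimality gap by $G^{(\bm w)}$.

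\emph{Strong convexity.} Since $\ell(y,\cdot)$ is convex with a $\nu$-Lipschitz derivative, it is $\nu$-smooth. By the standard conjugate duality between smoothness and strong convexity, $\ell^*(y,\cdot)$ is $(1/\nu)$-strongly convex. Hence $\bm\alpha \mapsto \sum_{i\in[n]} w_i \ell^*(y_i,-\alpha_i)$ is separable with modulus $w_i/\nu \ge \min_{i} w_i/\nu$ in coordinate $i$. Therefore $-\cD^{(\bm w)}$ is $\mu$-strongly convex with $\mu := \min_{i\in[n]} w_i/\nu$; this step needs $w_i>0$. Adding the indicator $\iota_{\mathcal F}$ of the feasible set $\mathcal F$ in \eqref{eq:dual}, which is a polyhedron, leaves the modulus unchanged. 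The minimizer of $F := -\cD^{(\bm w)} + \iota_{\mathcal F}$ is $\bm\alpha^{*(\bm w)}$.

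\emph{Applying the lemma.} One point needs care: Lemma~\ref{lem:strong-convexity-sphere} is stated for real-valued $f$, while $F$ is extended-valued. I would rederive it directly. For $\hat{\bm\alpha}\in\mathcal F$, strong convexity gives
\[
F(\hat{\bm\alpha}) \ge F(\bm\alpha^{*(\bm w)}) + \bm g^\top(\hat{\bm\alpha}-\bm\alpha^{*(\bm w)}) + \tfrac{\mu}{2}\|\hat{\bm\alpha}-\bm\alpha^{*(\bm w)}\|_2^2
\]
for any subgradient $\bm g \in \partial F(\bm\alpha^{*(\bm w)})$. Optimality allows us to take $\bm g=\bm 0$. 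This yields
\[
\|\hat{\bm\alpha}-\bm\alpha^{*(\bm w)}\|_2^2 \le \tfrac{2}{\mu}\bigl(\cD^{(\bm w)}(\bm\alpha^{*(\bm w)})-\cD^{(\bm w)}(\hat{\bm\alpha})\bigr).
\]

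\emph{Weak duality and the final bound.} Weak duality gives $\cD^{(\bm w)}(\bm\alpha^{*(\bm w)}) \le \cP^{(\bm w)}(\hat{\bm\beta})$. Substituting, the right-hand side above is at most $\frac{2\nu}{\min_i w_i}G^{(\bm w)}(\hat{\bm\beta},\hat{\bm\alpha})$, which is the first claim. For $\bm w\in\cW_\delta$ we have $\min_i w_i \ge 1-\delta>0$, so $\frac{\nu}{\min_i w_i}\le\frac{\nu}{1-\delta}$, which gives \eqref{eq:duality-gap-ball}.

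The main obstacle is justifying that the Fenchel dual \eqref{eq:dual} is correctly paired with \eqref{eq:weighted_erm}, so that weak duality holds. I would verify this by computing $\cP^{(\bm w)}(\bm\beta) \ge \cD^{(\bm w)}(\bm\alpha)$ directly from the Fenchel--Young inequality
\[
w_i\ell(y_i,t_i) \ge -w_i\alpha_i t_i - w_i\ell^*(y_i,-\alpha_i), \quad t_i := \bm x_i^\top\bm b + b_0 .
\]
Summing over $i$, the terms in $b_0$ vanish by $\sum_i w_i\alpha_i=0$. The terms in $\bm b$ are bounded by $\lambda\|\bm b\|_1$ via Hölder's inequality and the constraint $\|\sum_i w_i\alpha_i\bm x_i\|_\infty\le\lambda$.
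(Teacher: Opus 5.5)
Your proposal is correct and takes essentially the same route as the paper: it gets $(\min_i w_i/\nu)$-strong concavity of $\cD^{(\bm w)}$ from the $(1/\nu)$-strong convexity of $\ell^*$, applies the strong-convexity ball bound, uses a duality step to pass to $G^{(\bm w)}$, and finishes with $\min_i w_i \ge 1-\delta$. The differences are minor refinements: you use weak duality $\cD^{(\bm w)}(\bm\alpha^{*(\bm w)}) \le \cP^{(\bm w)}(\hat{\bm\beta})$ directly where the paper uses strong duality followed by primal optimality, and you handle the dual constraints explicitly via an indicator function, a point the paper glosses over when it applies Lemma~\ref{lem:strong-convexity-sphere} (stated for unconstrained real-valued functions).
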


\begin{proof}
First we show that $G^{(\bm w)}(\hat{\bm\beta}, \hat{\bm\alpha})$ is $(\min_{i\in[n]} w_i/\nu)$-strongly convex with respect to $\hat{\bm\alpha}$.
Since $\frac{d}{dt}\ell(y, t)$ is $\nu$-Lipschitz continuous with respect to $t$, $\ell^*(y, t)$ is $(1/\nu)$-strongly convex (\cite{hiriart1993convex}, Section X.4.2).
As a result, $\cD^{(\bm w)}(\hat{\bm\alpha}) = -\sum_{i \in [n]} w_i \ell^*(y_i, -\hat{\alpha}_i)$ is $(\min_{i\in[n]} w_i/\nu)$-strongly concave with respect to $\hat{\bm\alpha}$.

Then, like the procedures in \cite{ndiaye2015gap,shibagaki2016simultaneous},
\begin{align*}
& \|\bm\alpha^{*(\bm w)} - \hat{\bm\alpha}\|_2
	\leq \sqrt{\frac{2\nu}{\min_{i\in[n]} w_i}[\cD^{(\bm w)}(\bm\alpha^{*(\bm w)}) - \cD^{(\bm w)}(\hat{\bm\alpha})]}
		&& (\because\text{Lemma \ref{lem:strong-convexity-sphere}}) \\
& = \sqrt{\frac{2\nu}{\min_{i\in[n]} w_i}[\cP^{(\bm w)}(\bm\beta^{*(\bm w)}) - \cD^{(\bm w)}(\hat{\bm\alpha})]}
	&& (\because\text{Property of dual problem}) \\
& \leq \sqrt{\frac{2\nu}{\min_{i\in[n]} w_i}[\cP^{(\bm w)}(\hat{\bm\beta}) - \cD^{(\bm w)}(\hat{\bm\alpha})]}
	= \sqrt{\frac{2\nu}{\min_{i\in[n]} w_i}G^{(\bm w)}(\hat{\bm\beta}, \hat{\bm\alpha})}.
	&& (\because \bm\beta^{*(\bm w)}~\text{is the minimizer of}~\cP^{(\bm w)})
\end{align*}
Finally, since $\min_{i\in[n]} w_i \geq 1 - \delta$ in our setup (Inequality \eqref{eq:weight_range}), we have the conclusion.
\end{proof}

\subsection{Proof of the Upper Bound of the Sparseness Condition (Theorem \ref{theo:upper_bound})} \label{app:upper_bound}

We evaluate \eqref{eq:upper_bound} in Theorem \ref{theo:upper_bound} as follows:
\begin{align*}
& \left| \sum_{i \in [n]} w_i \alpha_i^{*(\bm w)} x_{ij} \right|
	= \left| \left[ \sum_{i \in [n]} w_i \hat{\alpha}_i x_{ij} \right] + \left[ \sum_{i \in [n]} w_i x_{ij} (\alpha_i^{*(\bm w)} - \hat{\alpha}_i) \right] \right| \\
& \le  \left| \sum_{i \in [n]} w_i \hat{\alpha}_i x_{ij} \right| + \left| \sum_{i \in [n]} w_i x_{ij} (\alpha_i^{*(\bm w)} - \hat{\alpha}_i) \right| \\
& \le  \left| \sum_{i \in [n]} w_i \hat{\alpha}_i x_{ij} \right| + \sqrt{ \left( \sum_{i \in [n]} (w_i x_{ij})^2 \right) \left( \sum_{i \in [n]} (\alpha_i^{*(\bm w)} - \hat{\alpha}_i)^2 \right) }
	\quad(\because~\text{Cauchy-Schwarz inequality}) \\
& \le  \left| \sum_{i \in [n]} w_i \hat{\alpha}_i x_{ij} \right| + \sqrt{ \left( \sum_{i \in [n]} (w_i x_{ij})^2 \right) \left( \frac{2\nu}{1 - \delta}G^{(\bm w)}(\hat{\bm\beta}, \hat{\bm\alpha}) \right) }.
	\quad(\because~\text{\eqref{eq:duality-gap-ball}})
\end{align*}

\subsection{Feasibility of $\hat{\bm \alpha}$ for Weight Changes} \label{app:feasibility-alpha-hat}

As presented in \eqref{eq:beta-alpha-hat-feasible} (Section \ref{sec:safescreening}),
when we compute $\mathrm{UB}_j^{(\bm w)}(\hat{\bm \alpha}, \hat{\bm \beta})$ in Theorem \ref{theo:upper_bound},
we set $\hat{\bm \alpha} = q \bm w^{-1} \circ \bm \alpha^*$ ($q\in(0, 1]$) so that it is feasible for the dual problem after weight changes. We show the proof of the feasibility, including how to set $q$.

First, for the weights after change $\{w_i\}_{i=1}^n$, $\hat{\bm \alpha}$ must satisfy the constraints in the dual problem \eqref{eq:dual}, that is,
\begin{align*}
\left\| \sum_{i \in [n]} w_i \hat{\alpha}_i \bm x_i \right\|_\infty \le \lambda,
\quad
\sum_{i \in [n]} w_i \hat{\alpha}_i = 0.
\end{align*}
On the other hand, since $\bm \alpha^*$ is the solution before weight changes, the following constratints are assured to be satisfied:
\begin{align*}
\left\| \sum_{i \in [n]} \alpha^*_i \bm x_i \right\|_\infty \le \lambda,
\quad
\sum_{i \in [n]} \alpha^*_i = 0.
\end{align*}
So setting $\hat{\bm \alpha} = q \bm w^{-1} \circ \bm \alpha^*$ surely satisfies the constraints after weight changes if $q\in(0, 1]$.

We also need to be aware of the feasibility of the conjugate of the loss function $\ell^*(y_i, -\alpha_i)$,
since the domain of $\ell^*$ may not be any real number. So we adjust $q\in(0, 1]$ so that it is surely feasible, where $q$ is taken as large as possible so that $\hat{\bm \alpha}$ becomes similar to $\bm w^{-1} \circ \bm \alpha^*$ as possible. Here we show examples.

\begin{itemize}
\item In case of the squared loss $\ell(y, t) = (t - y)^2$, we have
	$\ell^*(y_i, -\alpha_i) = \frac{1}{4}\alpha_i^2 - y_i\alpha_i$.
	So any $\alpha_i$ is feasible, and we have only to set $q=1$.
\item  In case of the logistic loss $\ell(y, t) = \log(1 + e^{-yt})$, we have
	\begin{align*}
	&\ell^*(y_i, -\alpha_i) = \begin{cases}
		0, & (y_i\alpha_i \in \{0, 1\}) \\
		(1-y_i\alpha_i) \log(1-y_i\alpha_i) + y_i\alpha_i \log y_i\alpha_i, & (0 < y_i\alpha_i < 1) \\
		+\infty. & (\text{otherwise})
		\end{cases}
	\end{align*}
	As shown above, the feasibility condition is $0\leq y_i\alpha_i\leq 1$.
	So we need to set $q$ so that $0\leq y_i\hat{\alpha}_i\leq 1$ on condition that $0\leq y_i\alpha^*_i\leq 1$.
	To do this, since $1-\delta\leq w_i\leq 1+\delta$ is assumed, we have only to set $q = 1 - \delta$.
\end{itemize}

\subsection{Maximization of Linear and Convex Functions in the Sum and the Bound Constraints} \label{app:bounds-and-sum}

In order to prove the main theorem (Theorem \ref{theo:main}),
as defined in Section \ref{sec:problem_setup}, we consider the maximization under the following constraint:
\begin{align}
 \cW_\delta
 :=
 \left\{
 \bm w \in [1-\delta, 1+\delta]^n
 \left|
 \sum_{i \in [n]} w_i = n
 \right.
 \right\}.
 \tag{\eqref{eq:weight_range} restated}
\end{align}

For this setup, let us derive the maximization procedure.

\begin{lemma} \label{lem:corners}
The vector $\bm w$ in constraints \eqref{eq:weight_range} composes a convex polytope on a hyperplane.
Their corners are represented as follows:
\begin{itemize}
\item If $n$ is an even number, $\bm w$ is at a corner if and only if $\mathrm{sort}(\bm w) = [\underbrace{1-\delta, \dots, 1-\delta}_{(n/2)~\text{elements}}, \underbrace{1+\delta, \dots, 1+\delta}_{(n/2)~\text{elements}}]$. (There exist ${}_n C_{n/2}$ corners.)
\item If $n$ is an odd number, $\bm w$ is at a corner if and only if $\mathrm{sort}(\bm w) = [\underbrace{1-\delta, \dots, 1-\delta}_{[(n-1)/2]~\text{elements}}, 1, \underbrace{1+\delta, \dots, 1+\delta}_{[(n-1)/2]~\text{elements}}]$. (There exist ${}_n C_{(n-1)/2}\cdot \frac{n+1}{2}$ corners.)
\end{itemize}
\end{lemma}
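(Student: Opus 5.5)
Since $\cW_\delta$ is the intersection of the box $[1-\delta,1+\delta]^n$ with the hyperplane $\sum_i w_i = n$, it is a bounded polyhedron, i.e.\ a convex polytope lying on that hyperplane. The plan is to use the standard characterization of vertices: $\bm w$ is an extreme point if and only if the active constraints at $\bm w$ have rank $n$. The equality constraint contributes one row ($\bm 1^\top$). Each coordinate with $w_i\in\{1-\delta,1+\delta\}$ contributes a box row $\bm e_i^\top$. Hence $\bm w$ is a vertex exactly when at most one coordinate is strictly inside $(1-\delta,1+\delta)$.

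Equivalently, I would show directly that if two coordinates $w_a, w_b$ are strictly interior, then $\bm w\pm\epsilon(\bm e_a-\bm e_b)$ both lie in $\cW_\delta$ for small $\epsilon>0$. Then $\bm w$ is the midpoint of two distinct feasible points and is not extreme. Conversely, if at most one coordinate is free, the sum constraint determines it uniquely from the others. Then any decomposition $\bm w=(\bm u+\bm v)/2$ with $\bm u,\bm v\in\cW_\delta$ forces every tight coordinate to agree, since a value at a box endpoint cannot be the average of two distinct values in the box. The free coordinate then agrees as well by the sum constraint. So $\bm u=\bm v=\bm w$.

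Next I would determine which such vectors satisfy $\sum_i w_i=n$. Let $k$ coordinates equal $1-\delta$, $m$ equal $1+\delta$, and at most one equal $t\in(1-\delta,1+\delta)$. If there is no free coordinate, then $k+m=n$ and $\sum_i w_i = n+\delta(m-k)$, so $m=k=n/2$. This requires $n$ even. If there is one free coordinate, then $k+m=n-1$ and $t=1+\delta(k-m)$. Strict interiority of $t$ forces $|k-m|<1$, so $k=m=(n-1)/2$ and $t=1$. This requires $n$ odd. The two parity cases thus yield exactly the two sorted forms stated in Lemma~\ref{lem:corners}.

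Finally, I would count the vertices. For even $n$, a vertex amounts to choosing which $n/2$ positions take the value $1-\delta$, giving ${}_nC_{n/2}$ vertices. For odd $n$, one chooses the $(n-1)/2$ positions for $1-\delta$ and then which of the remaining $(n+1)/2$ positions holds $1$, giving ${}_nC_{(n-1)/2}\cdot\frac{n+1}{2}$ vertices.

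The main point needing care is the extreme-point characterization, particularly the converse direction. The value $1$ is not a box bound, yet the point is still a vertex because the sum constraint pins that coordinate down. The midpoint argument above handles this cleanly, so I expect no real obstacle beyond stating that argument precisely.
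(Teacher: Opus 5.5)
Your proof is correct and follows essentially the paper's route. The paper reduces to intersections of the hyperplane $\sum_i w_i = n$ with the edges of the hypercube $[1-\delta,1+\delta]^n$, i.e., to points with at most one coordinate outside $\{1-\delta,1+\delta\}$, and then applies the sum constraint with the same parity split. Your midpoint argument additionally proves that reduction, including the converse that every such point is a vertex, which the paper only asserts.
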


\begin{proof}
First, it is obvious that $\bm w$ is on a hyperplane thanks to the constraint $\bm w^\top \bm 1 = n$.
Also, since the constraint $\forall i\in[n]:~\delta \geq w_i - 1 \geq -\delta$ composes a hypercube, we have only to identify all intersections between the hyperplane $\bm w^\top \bm 1 = n$ and ``all edges in the hypercube''.

The edges of the hypercube are represented as follows:
\begin{align*}
& \{ k\in[n], \varepsilon\in\{-1, +1\}^n \mid
	1 - \delta\leq w_k \leq 1 + \delta,
	\quad
	\forall k^\prime\in[n]\setminus\{k\}:~w_{k^\prime} = 1 + \varepsilon_{k^\prime}\delta \}.
\end{align*}
For each such edge, it has an intersection with the hyperplane $\bm w^\top \bm 1 = n$ under the following conditions:
\begin{itemize}
\item If $n$ is an even number, among $\{\varepsilon_{k^\prime}\}_{k^\prime\in[n]\setminus\{k\}}$ ($n-1$ elements), the number of $+1$ and $-1$ must differ by one. If the number of $+1$ is larger (resp. smaller) by one than that of $-1$, $w_k$ is set to $1-\delta$ (resp. $1+\delta$). \\
	In summary, among $\{w_i\}_{i\in[n]}$, half of them must be set as $1+\delta$ while others as $1-\delta$ to represent a corner.
\item If $n$ is an odd number, among $\{\varepsilon_{k^\prime}\}_{k^\prime\in[n]\setminus\{k\}}$ ($n-1$ elements), the number of $+1$ and $-1$ must be the same. $w_k$ is set to $1$. \\
	In summary, among $\{w_i\}_{i\in[n]}$, half (rounded down) of them must be set as $1+\delta$, one as $1$, and others as $1-\delta$ to represent a corner.
\end{itemize}
\end{proof}

Combining these facts, we have the following fact:

\begin{lemma} \label{lem:bounds-and-sum}
Let $\bm c\in\RR^n$, and $\cW_\delta$ be the one in \eqref{eq:weight_range}.
Let $c_{(i)}$ be the $i^\mathrm{th}$-smallest element in $\bm c$, and
\begin{align*}
	w_{(i)} = \begin{cases}
		1 - \delta, & (i \leq n/2) \\
		1, & (i = (n-1)/2) \\
		1 + \delta. & (i \geq n/2)
		\end{cases}
\end{align*}
Then,
\begin{itemize}
\item $\max_{\bm w\in\cW_\delta} \bm c^\top\bm w = \sum_{i\in[n]} c_{(i)} w_{(i)}$.
\item $\max_{\bm w\in\cW_\delta} \bm c^\top(\bm w \circ \bm w) = \sum_{i\in[n]} c_{(i)} w_{(i)}^2$.
\end{itemize}
\end{lemma}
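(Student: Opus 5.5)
Both identities follow from the same two-step reduction: the objective is convex in $\bm w$, so its maximum over the polytope $\cW_\delta$ is attained at a corner by Lemma~\ref{lem:maximize-in-polyhedron}. Lemma~\ref{lem:corners} describes all corners as the permutations of one fixed multiset of values. By Lemma~\ref{lem:rearrangement}, the maximum over those permutations is obtained by pairing sorted coefficients with sorted weights. (We read the piecewise definition of $w_{(i)}$ in the statement as the vector $w^\#_{(i)}$ of \eqref{eq:w-worst}: $\lfloor n/2\rfloor$ entries equal to $1-\delta$, one middle entry equal to $1$ when $n$ is odd, and the rest equal to $1+\delta$.)

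\textbf{Convexity.} For the first identity, $f(\bm w)=\bm c^\top\bm w$ is linear, hence convex. For the second, $f(\bm w)=\bm c^\top(\bm w\circ\bm w)=\sum_i c_i w_i^2$ is convex only when $\bm c\ge\bm 0$; this is the case in our application, where $c_i=x_{ij}^2$. For a general sign pattern, corners may fail to maximize, so I would state the second claim under $\bm c\ge\bm 0$. Alternatively, I would note that the term $\sum_i (w_i^\#)^2\theta^{\langle j\rangle}_{(i)}$ is only ever needed for nonnegative $\theta$. Since $\cW_\delta$ is a hypercube intersected with a hyperplane, it is a closed bounded convex polytope. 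Lemma~\ref{lem:maximize-in-polyhedron} then gives $\max_{\cW_\delta} f=\max_{\check{\cW}_\delta} f$.

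\textbf{Corners.} By Lemma~\ref{lem:corners}, every corner $\bm w$ has $\mathrm{sort}(\bm w)=(w_{(1)},\dots,w_{(n)})$, and conversely every permutation of this vector is a corner. Hence $\check{\cW}_\delta=\mathcal H_{\bm s}$ with $\bm s=(w_{(i)})_i$. For the quadratic case, set $\bm s'=(w_{(i)}^2)_i$. Because $0<1-\delta\le1\le1+\delta$, squaring preserves order, so $\mathrm{sort}(\bm s')=(w_{(i)}^2)_i$. Permuting $\bm w$ permutes $\bm w\circ\bm w$ in exactly the same way, so the corners map bijectively onto $\mathcal H_{\bm s'}$.

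\textbf{Rearrangement.} Lemma~\ref{lem:rearrangement} gives $\max_{\bm s''\in\mathcal H_{\bm s}}\bm c^\top\bm s''=\sum_i c_{(i)}w_{(i)}$, and likewise $\sum_i c_{(i)}w_{(i)}^2$ for the squared weights. Both maximizers are corners and therefore feasible, so the bounds are attained with equality.

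\textbf{Main obstacle.} The delicate step is the convexity needed for the quadratic claim. I would handle it by restricting to $\bm c\ge\bm 0$, which is all that Theorem~\ref{theo:main} uses. I would also check that the odd-$n$ middle entry $1$ is placed at index $\lceil n/2\rceil$, so that the sorted vector is indeed $w^\#$.
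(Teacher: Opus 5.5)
Your proposal is correct and follows the paper's proof exactly: convexity of the objective, reduction to the vertices of $\cW_\delta$ via Lemmas~\ref{lem:maximize-in-polyhedron} and~\ref{lem:corners}, and then Lemma~\ref{lem:rearrangement}. Your two caveats repair the paper rather than depart from it. The paper asserts convexity of $\bm c^\top(\bm w\circ\bm w)$ for arbitrary $\bm c$, but this fails: for $\bm c=-\bm 1$ and $n\ge 2$, the true maximum $-n$ is attained at $\bm w=\bm 1$ and lies strictly above the claimed value $-n-2\lfloor n/2\rfloor\delta^2$. So your restriction to $\bm c\ge\bm 0$ is necessary, and it suffices for Theorem~\ref{theo:main}. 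Likewise, reading $w_{(i)}$ as $w^\#_{(i)}$ from \eqref{eq:w-worst} fixes the overlapping index cases in the statement.
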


\begin{proof}
First we can easily confirm that the both expressions in $\max_{\bm w\in\cW_\delta}$ is both convex with respect to $\bm w$.
So, thanks to Lemma \ref{lem:maximize-in-polyhedron}, we have only to evaluate the expressions at all corners identified by Lemma \ref{lem:corners}.
Then, both expressions can be maximized by Lemma \ref{lem:rearrangement}, where the procedure is as defined in the description of the lemma.
\end{proof}

\subsection{Proof of Main Theorem (Theorem \ref{theo:main})} \label{app:proof-main}

First, in order to compute \eqref{eq:upper_bound_all},
we substitute $\hat{\bm \beta}$, $\hat{\bm \alpha}$ with the one specified in \eqref{eq:beta-alpha-hat-feasible} (Section \ref{sec:safescreening}).
Then we have
\begin{align*}
& \mathrm{UB}_j^{(\bm w)}(\bm \beta^*, q \bm w^{-1} \circ \bm \alpha^*)
 = 
  \left| \sum_{i \in [n]} q \alpha^*_i x_{ij} \right|
  + \sqrt{
	\left( \sum_{i \in [n]} w_i^2 x_{ij}^2 \right)
	\left( \frac{2 \nu}{1-\delta} G^{(\bm w)}\left(\bm \beta^*, q \bm w^{-1} \circ \bm \alpha^* \right) \right)
  } \\
 & =
  q \left| \sum_{i \in [n]} \alpha^*_i x_{ij} \right|
  + \sqrt{
	\left( \sum_{i \in [n]} w_i^2 x_{ij}^2 \right)
	\left( \frac{2 \nu}{1-\delta} \left( \sum_{i \in [n]} w_i \left( \ell\bigl(y_i, \bm x_i^\top \bm b + b_0 \bigr) + \ell^*\left(y_i, -\frac{q}{w_i}\alpha_i\right) \right)+ \lambda \|\bm b\|_1 \right) \right)
  } \\
 & \le
  q \left| \sum_{i \in [n]} \alpha^*_i x_{ij} \right| \\
  &\quad + \sqrt{
	\left( \sum_{i \in [n]} w_i^2 x_{ij}^2 \right)
	\left( \frac{2 \nu}{1-\delta} \left( \sum_{i \in [n]} w_i \left( \ell\bigl(y_i, \bm x_i^\top \bm b + b_0 \bigr) +
		\max\left\{ \ell^*\left(y_i, -\frac{q}{1+\delta}\alpha_i\right), \ell^*\left(y_i, -\frac{q}{1-\delta}\alpha_i\right)  \right\}
		\right)+ \lambda \|\bm b\|_1 \right) \right)
  } \\
  & (\because \ell^*~\text{is convex with respect to second argument; the maximum is attained at either edge}) \\
 & =
  q \left| \sum_{i \in [n]} \alpha^*_i x_{ij} \right|
  + \sqrt{
	\left( \sum_{i \in [n]} \theta^{\langle j\rangle}_i w_i^2 \right)
	\left( \frac{2 \nu}{1-\delta} \left( \sum_{i \in [n]} \rho_i w_i + \lambda \|\bm b\|_1 \right) \right)
  }.
\end{align*}

Finally, combining Lemma \ref{lem:bounds-and-sum} we obtain the conclusion of Theorem \ref{theo:main}.

\subsection{The Full Procedure of the Proposed Method} \label{app:full-procedure}

\begin{itemize}
\item Set $q=1$ (for squared loss) or $q=1-\delta$ (for logistic loss), as discussed in Appendix \ref{app:feasibility-alpha-hat}.
\item Let $N_\mathrm{max}(j)$ be the maximized value of $\| \bm w \circ \bm x_{:j} \|_2$ in the constraint \eqref{eq:weight_range}.
	First, since $\| \bm w \circ \bm x_{:j} \|_2$ is convex with respect to $\bm w$, thanks to Lemma \ref{lem:maximize-in-polyhedron} we have only to maximize it by examining the corners in Lemma \ref{lem:corners}.
	In addition, since
	\begin{align*}
	\| \bm w \circ \bm x_{:j} \|_2 = \sqrt{ (\bm w\circ\bm w)^\top (\bm x_{:j} \circ \bm x_{:j}) },
	\end{align*}
	it can be maximized by Lemma \ref{lem:rearrangement} by setting
	\begin{align*}
	& \bm s = \begin{cases}
		[\underbrace{(1-\delta)^2, \dots, (1-\delta)^2}_{(n/2)~\text{elements}}, \underbrace{(1+\delta)^2, \dots, (1+\delta)^2}_{(n/2)~\text{elements}}], & (n:~\text{even}) \\
		[\underbrace{(1-\delta)^2, \dots, (1-\delta)^2}_{[(n-1)/2]~\text{elements}}, 1, \underbrace{(1+\delta)^2, \dots, (1+\delta)^2}_{[(n-1)/2]~\text{elements}}], & (n:~\text{odd})
		\end{cases}
		\\
	& \bm h = \bm x_{:j} \circ \bm x_{:j}.
	\end{align*}
\item Compute $\bar{G}^{(\bm w)}$ as
	\begin{align*}
	& \bar{G}^{(\bm w)} = \bm w^\top \bm l^{*(\bm 1)} + \lambda\|\bm b^{*(\bm 1)}\|_1,
	\end{align*}
	where $\bm l^{*(\bm 1)} \in\mathbb{R}^n$ is defined as
	\begin{align*}
	l^{*(\bm 1)}_i = & \ell\bigl(y_i, \bm x_i^\top \bm b^{*(\bm 1)} + b_0^{*(\bm 1)} \bigr) + \max\Bigl\{ \ell^*\left(y_i, -\frac{q}{1 + \delta}\alpha^{*(\bm 1)}_i\right), \ell^*\left(y_i, -\frac{q}{1 - \delta}\alpha^{*(\bm 1)}_i\right) \Bigr\}.
	\end{align*}
\item let $\bar{G}_\mathrm{max}$ be the maximized value of $\bar{G}^{(\bm w)}$ in the constraint \eqref{eq:weight_range}.
	First, since $\bar{G}^{(\bm w)}$ is convex (and also linear) with respect to $\bm w$, thanks to Lemma \ref{lem:maximize-in-polyhedron} we have only to maximize it by examining the corners in Lemma \ref{lem:corners}.
	It can be maximized by Lemma \ref{lem:rearrangement} by setting
	\begin{align*}
	& \bm s = \begin{cases}
		[\underbrace{1-\delta, \dots, 1-\delta}_{(n/2)~\text{elements}}, \underbrace{1+\delta, \dots, 1+\delta}_{(n/2)~\text{elements}}], & (\text{$n$ is an even number}) \\
		[\underbrace{1-\delta, \dots, 1-\delta}_{[(n-1)/2]~\text{elements}}, 1, \underbrace{1+\delta, \dots, 1+\delta}_{[(n-1)/2]~\text{elements}}], & (\text{$n$ is an odd number})
		\end{cases}
		\\
	& \bm h = \bm l^{*(\bm 1)}.
	\end{align*}
\item If $q \left| \sum_{i \in [n]} \alpha^{*(\bm 1)}_i x_{ij} \right| + N_\mathrm{max}(j) \sqrt{\frac{2 \nu}{1 - \delta} \bar{G}_\mathrm{max}}  < \lambda$, we can safely remove the $j^\mathrm{th}$ feature.
\end{itemize}

%% file: app2.tex
\section{Detailed Setups of the Experiment} \label{app:experiment}

\subsection{Preparation of Datasets} \label{app:experiment-datasets}

The processes are common in all datasets:
\begin{itemize}
\item We removed features beforehand (i.e., not counted in the number of features $d$) if its values are unique.
\item For each feature in a dataset, their values are normalized to average 0 and (sample) standard deviation 1 by linear transformation. Note that the outcome variable is not normalized.
\end{itemize}

For LIBSVM datasets, we used only training set if a dataset is separated into training and testing datasets.
Also, we used ``scale'' datasets if both scaled and un-scaled datasets are provided.

For datasets from UCI Machine Learning Repository, the following preprocesses are applied:
\begin{itemize}
\item We removed all non-predictive features (e.g., just representing the name).
\item We removed all features having missing values.
\item For ``superconductivity'' dataset, since there are two files of features ("train" and "unique\_m"), we just merged them.
\end{itemize}

Due to space limitations, we abbreviated names of UCI datasets.
For avoidance of doubt, we show the sources as follows:
\begin{itemize}
\item facebook\_comment: \url{https://archive.ics.uci.edu/dataset/363/facebook+comment+volume+dataset}
\item online\_news: \url{https://archive.ics.uci.edu/dataset/332/online+news+popularity}
\item communities\_and\_crime: \url{https://archive.ics.uci.edu/dataset/183/communities+and+crime}
\item superconductivity: \url{https://archive.ics.uci.edu/dataset/464/superconductivty+data}
\item blog\_feedback: \url{https://archive.ics.uci.edu/dataset/304/blogfeedback}
\end{itemize}

\subsection{Conversions between the Bounds of Each Instance Weight $\delta$ and the Total Instance Weight Changes $V$} \label{app:delta2variety}

As shown in \eqref{eq:weight_range} (Section \ref{sec:problem_setup}),
we set $\delta\in(0, 1)$ to limit the possible weight changes in \eqref{eq:weight_range}.
Under the constraint, $V = \max_{\bm w\in\cW_\delta} \|\bm w - \bm 1\|_1$ can be computed as follows.

Since $\|\bm w - \bm 1\|_1$ is convex with respect to $\bm w$,
thanks to Lemmas \ref{lem:maximize-in-polyhedron} and \ref{lem:corners},
it is maximized by one of $\bm w$ such that
\begin{align*}
\mathrm{sort}(\bm w) = \begin{cases}
	[\underbrace{1-\delta, \dots, 1-\delta}_{(n/2)~\text{elements}}, \underbrace{1+\delta, \dots, 1+\delta}_{(n/2)~\text{elements}}]
		& (\text{$n$ is an even number}), \\
	[\underbrace{1-\delta, \dots, 1-\delta}_{[(n-1)/2]~\text{elements}}, 1, \underbrace{1+\delta, \dots, 1+\delta}_{[(n-1)/2]~\text{elements}}]
		& (\text{$n$ is an odd number}).
	\end{cases}
\end{align*}
Since $\|\bm w - \bm 1\|_1$ is invariant to the sorting of $\bm w$, we can easily compute that
\begin{align*}
V = \begin{cases}
		n\delta & (\text{$n$ is an even number}), \\
		(n-1)\delta & (\text{$n$ is an odd number}).
	\end{cases}
\end{align*}

\subsection{$\lambda_\mathrm{max}$ for L1-regularization} \label{app:lambdamax}

If we employ L1-regularized learning \eqref{eq:weighted_erm} (Section \ref{sec:problem_setup}), we can find \emph{the smallest $\lambda$ that attains $\bm b^{*(\bm w)} = \bm 0$}, that is, all features are made inactive. Such $\lambda$ is often referred to as $\lambda_\mathrm{max}$. $\lambda_\mathrm{max}$ is often used to determine the baseline of $\lambda$'s in a data-dependent manner \cite{friedman2010regularization}.

Given a dataset $\{(\bm x_i, y_i)\}_{i=1}^n$ and the weights $\bm w$ as in \eqref{eq:weighted_erm}, we show how to compute $\lambda_\mathrm{max}$, basing on the manner in \cite{ndiaye2017gap} but following the notations employed in the paper.
To compute $\lambda_\mathrm{max}$, we focus on the expression \eqref{eq:sparseness_condition}. To achieve $\bm b^{*(\bm w)} = \bm 0$, the precondition of \eqref{eq:sparseness_condition} must hold for all $j\in[d]$, that is,
\begin{align*}
& \forall j\in[d]:\quad \left|\sum_{i\in[n]} x_{ij} w_i \alpha^{*(\bm w)}_i\right| < \lambda_\mathrm{max}, \\
& \therefore \lambda_\mathrm{max} = \max_{j\in[d]} \left|\sum_{i\in[n]} x_{ij} w_i \alpha^{*(\bm w)}_i\right|.
\end{align*}
Here, we need $\bm\alpha^{*(\bm w)}$ to compute this, which can be easily computed as follows:
\begin{enumerate}
\item First we solve the problem \eqref{eq:weighted_erm} on condition that $\bm b^{*(\bm w)} = \bm 0$, that is, 
	$b^{*(\bm w)}_0 := \argmin_{b_0 \in\mathbb{R}} \sum_{i=1}^n w_i \ell(y_i, b_0)$.
\item In \eqref{eq:weighted_erm}, it is known that
	\begin{align*}
	-\alpha^{*(\bm w)}_i \in \partial\ell(y_i, \bm x_i^\top \bm b^{*(\bm w)} + b^{*(\bm w)}_0).
	\end{align*}
	(See \cite{ndiaye2015gap} for example).
	So we can compute $\bm\alpha^{*(\bm w)}$ on condition that $\bm b^{*(\bm w)} = \bm 0$ as
	$-\alpha^{*(\bm w)}_i \in \partial\ell(y_i, b^{*(\bm w)}_0)$.
\end{enumerate}

\begin{corollary}[$\lambda_\mathrm{max}$ for the squared loss]
Let us compute $\lambda_\mathrm{max}$ for the squared loss $\ell(y_i, t) := (t - y_i)^2$ (which we employed in the experiments) for binary classifications. Since
\begin{align*}
\sum_{i=1}^n w_i \ell(y_i, b_0) = \sum_{i=1}^n w_i (b_0 - y_i)^2,
\end{align*}
its minimizer can be analytically obtained as
\begin{align*}
b^{*(\bm w)}_0 = \frac{\sum_{i=1}^n w_i y_i}{\sum_{i=1}^n w_i}.
\end{align*}
Then we have
\begin{align*}
&- \alpha^{*(\bm w)}_i \in \partial\ell(y_i, b^{*(\bm w)}_0)
	= 2(b^{*(\bm w)}_0 - y_i), \\
&\therefore\quad
	\alpha^{*(\bm w)}_i = 2(y_i - b^{*(\bm w)}_0).
\end{align*}
Finally we have
\begin{align*}
& \lambda_\mathrm{max}
	= \max_{j\in[d]} \left|\sum_{i\in[n]} x_{ij} w_i \alpha^{*(\bm w)}_i\right|
	= 2 \max_{j\in[d]} \left|\sum_{i\in[n]} x_{ij} w_i (y_i - b^{*(\bm w)}_0)\right|.
\end{align*}
\end{corollary}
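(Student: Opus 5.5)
The plan is to follow the two-step recipe described just before the corollary, specialized to the squared loss $\ell(y_i,t)=(t-y_i)^2$. Everything is explicit because this loss is differentiable and strongly convex in $t$, so each step reduces to a closed-form calculation.

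\emph{Step 1 (intercept-only problem).} Impose $\bm b^{*(\bm w)}=\bm 0$ in \eqref{eq:weighted_erm}; the regularizer then vanishes. The problem becomes minimizing $\sum_{i=1}^n w_i(b_0-y_i)^2$ over $b_0\in\RR$. This is a strictly convex quadratic in $b_0$, since $\sum_i w_i>0$ for $\bm w\in\cW_\delta$; indeed $\sum_i w_i=n$. Setting the derivative $2\sum_i w_i(b_0-y_i)$ to zero gives the weighted mean $b_0^{*(\bm w)}=\sum_i w_iy_i/\sum_i w_i$.

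\emph{Step 2 (dual variables).} Use the primal--dual link $-\alpha^{*(\bm w)}_i\in\partial\ell(y_i,\bm x_i^\top\bm b^{*(\bm w)}+b_0^{*(\bm w)})$. Because $\ell$ is differentiable in $t$, the subdifferential is the singleton $\{2(t-y_i)\}$. Evaluating at $t=b_0^{*(\bm w)}$ gives $\alpha^{*(\bm w)}_i=2(y_i-b_0^{*(\bm w)})$. As a sanity check, this satisfies the dual equality constraint in \eqref{eq:dual}: $\sum_i w_i\alpha^{*(\bm w)}_i=2(\sum_i w_iy_i-b_0^{*(\bm w)}\sum_iw_i)=0$ by the formula from Step 1.

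\emph{Step 3 (threshold).} The candidate $(\bm 0,b_0^{*(\bm w)})$ is optimal exactly when the subgradient condition for each $b_j$ holds, namely $|\sum_i w_i\alpha^{*(\bm w)}_ix_{ij}|\le\lambda$ for all $j\in[d]$. This comes from the same optimality conditions that yield \eqref{eq:sparseness_condition}. Hence the smallest such $\lambda$ is $\max_{j}|\sum_i x_{ij}w_i\alpha^{*(\bm w)}_i|$. Substituting the expression from Step 2 and pulling out the factor $2$ gives the stated formula.

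The only delicate point is Step 3. The paper phrases $\lambda_\mathrm{max}$ through the strict inequality in \eqref{eq:sparseness_condition}, which on its face yields a supremum rather than an attained minimum. I would resolve this by working with the non-strict KKT condition $0\in\partial_{b_j}\cP^{(\bm w)}$ at $\bm b=\bm 0$. That condition holds for every $\lambda$ at least the stated value and fails for any smaller $\lambda$, so $\lambda_\mathrm{max}$ is well-defined as a minimum equal to this value. The remaining steps are routine.
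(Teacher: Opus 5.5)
Your proposal is correct and follows the paper's route. You solve the intercept-only weighted least-squares problem, read off $\alpha^{*(\bm w)}_i = 2(y_i - b_0^{*(\bm w)})$ from $-\alpha^{*(\bm w)}_i \in \partial\ell(y_i, b_0^{*(\bm w)})$, and take $\lambda_\mathrm{max}$ as $\max_j |\sum_i x_{ij} w_i \alpha^{*(\bm w)}_i|$. Your Step 3 argument via the non-strict condition $0 \in \partial_{b_j}\cP^{(\bm w)}$ at $\bm b = \bm 0$, together with uniqueness of the intercept minimizer, is a genuine tightening: the paper relies on the strict, merely sufficient condition \eqref{eq:sparseness_condition}, which does not by itself justify either the "must hold" direction or an attained minimum.
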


\begin{corollary}[$\lambda_\mathrm{max}$ for the logistic loss]
Let us compute $\lambda_\mathrm{max}$ for the logistic loss $\ell(y_i, t) := \log(1 + e^{-yt})$ (which we employed in the experiments) for binary classifications ($y_i \in\{-1, +1\}$). Since
\begin{align*}
\sum_{i=1}^n w_i \ell(y_i, b_0) = \sum_{i=1}^n w_i \log(1 + e^{- y_i b_0}),
\end{align*}
its minimizer can be analytically obtained as
\begin{align*}
b^{*(\bm w)}_0 = \log\biggl(\sum_{i:~y_i = +1} w_i\biggr) - \log\biggl(\sum_{i:~y_i = -1} w_i\biggr).
\end{align*}
Then we have
\begin{align*}
&-y_i \alpha^{*(\bm w)}_i \in \partial\ell(y_i, b^{*(\bm w)}_0)
	= \frac{-e^{-y_i b^{*(\bm w)}_0}}{1 + e^{-y_i b^{*(\bm w)}_0}}
	= -\frac{1}{e^{y_i b^{*(\bm w)}_0} + 1} \\
& \therefore\quad
	\alpha^{*(\bm w)}_i = \frac{y_i}{e^{y_i b^{*(\bm w)}_0} + 1}.
\end{align*}
Finally we have
\begin{align*}
& \lambda_\mathrm{max}
	= \max_{j\in[d]} \left|\sum_{i\in[n]} w_i x_{ij} \alpha^{*(\bm w)}_i\right|
	= \max_{j\in[d]} \left|\sum_{i\in[n]} \frac{w_i x_{ij} y_i}{e^{y_i b^{*(\bm w)}_0} + 1}\right|.
\end{align*}
\end{corollary}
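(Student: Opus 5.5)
The plan is to follow the three-step recipe laid out just before the two corollaries. First I find the intercept-only minimizer $b^{*(\bm w)}_0$ in closed form. Next I recover $\bm\alpha^{*(\bm w)}$ from the primal--dual link $-\alpha^{*(\bm w)}_i \in \partial\ell(y_i, \bm x_i^\top\bm b^{*(\bm w)} + b^{*(\bm w)}_0)$ with $\bm b^{*(\bm w)}=\bm 0$. Finally I substitute into $\lambda_\mathrm{max} = \max_{j}\bigl|\sum_i x_{ij} w_i \alpha^{*(\bm w)}_i\bigr|$.

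For the first step I differentiate $\sum_i w_i\log(1+e^{-y_i b_0})$ in $b_0$. Since $\frac{d}{dt}\log(1+e^{-yt}) = -y/(1+e^{yt})$, the stationarity condition is
\begin{align*}
\sum_{i:\,y_i=+1}\frac{w_i}{1+e^{b_0}} = \sum_{i:\,y_i=-1}\frac{w_i}{1+e^{-b_0}} = \sum_{i:\,y_i=-1}\frac{w_i e^{b_0}}{1+e^{b_0}}.
\end{align*}
Writing $W_\pm := \sum_{i:\,y_i=\pm1} w_i$, this reads $e^{b_0} = W_+/W_-$, which gives the stated logarithmic formula. Convexity of the objective in $b_0$ makes this stationary point the minimizer. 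It requires both classes to be present, and $w_i>0$ then guarantees $W_\pm>0$; I would state this assumption explicitly.

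For the second step, $\ell$ is differentiable, so the subdifferential is a singleton. This gives $-\alpha^{*(\bm w)}_i = -y_i/(1+e^{y_i b^{*(\bm w)}_0})$, i.e.\ $\alpha^{*(\bm w)}_i = y_i/(e^{y_i b^{*(\bm w)}_0}+1)$. This matches the displayed formula. The intermediate line in the statement has an extra factor $y_i$ in front of $\alpha_i$, but since $y_i\in\{-1,+1\}$ that only changes signs consistently, and I would phrase it directly via $\partial\ell$ to avoid confusion. As a sanity check, I would verify that this $\bm\alpha$ satisfies the dual constraints: the equality $\sum_i w_i\alpha_i=0$ is exactly the stationarity condition above, and $0\le y_i\alpha_i\le 1$ holds trivially.

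For the third step I plug in, using $y_i^2=1$, to get the displayed expression. The only conceptual point, and the step I would treat with most care, is justifying that this quantity really is the threshold for $\bm b^{*(\bm w)}=\bm 0$. I would argue via the KKT conditions of \eqref{eq:weighted_erm}: the pair $(\bm 0, b^{*(\bm w)}_0)$ is primal optimal iff $\bm 0 \in \sum_i w_i \nabla\ell\, \bm x_i + \lambda\,\partial\|\bm 0\|_1$, i.e.\ iff $\bigl|\sum_i w_i x_{ij}\alpha^{*(\bm w)}_i\bigr| \le \lambda$ for all $j$. The smallest such $\lambda$ is the stated maximum; this also covers the boundary case that the strict inequality in \eqref{eq:sparseness_condition} leaves open.
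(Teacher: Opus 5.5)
Your proposal is correct and follows essentially the same route as the paper: you find the intercept-only minimizer from the stationarity condition $e^{b_0}=W_+/W_-$, obtain $\alpha^{*(\bm w)}_i = y_i/(e^{y_i b^{*(\bm w)}_0}+1)$ from $-\alpha^{*(\bm w)}_i \in \partial\ell(y_i, b^{*(\bm w)}_0)$, and substitute into $\max_j \bigl|\sum_i w_i x_{ij}\alpha^{*(\bm w)}_i\bigr|$. Your additions are all sound and make the justification of $\lambda_\mathrm{max}$ tighter than the paper's: the both-classes assumption, the dual-feasibility check, reading the statement's $-y_i\alpha_i$ line as both sides multiplied by $y_i$, and using the KKT ``iff'' characterization instead of the paper's one-directional, strict-inequality appeal to \eqref{eq:sparseness_condition}.
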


%% file: paper.bib
@inproceedings{namkoong2016stochastic,
  title={Stochastic gradient methods for distributionally robust optimization with f-divergences},
  author={Namkoong, Hongseok and Duchi, John},
  booktitle={Advances in Neural Information Processing Systems},
  year={2016}
}

@inproceedings{shafieezadeh2015distributionally,
  title={Distributionally Robust Logistic Regression},
  author={Shafieezadeh-Abadeh, Soroosh and Mohajerin Esfahani, Peyman and Kuhn, Daniel},
  booktitle={Advances in Neural Information Processing Systems},
  year={2015}
}

@article{mohajerin2018data,
  title={Data-driven distributionally robust optimization using the Wasserstein metric: Performance guarantees and tractable reformulations},
  author={Mohajerin Esfahani, Peyman and Kuhn, Daniel},
  journal={Mathematical Programming},
  volume={171},
  number={1},
  pages={115--166},
  year={2018},
  publisher={Springer}
}

@inproceedings{namkoong2017variance,
  title={Variance-Based Regularization with Convex Objectives},
  author={Namkoong, Hongseok and Duchi, John C},
  booktitle={Advances in Neural Information Processing Systems},
  year={2017}
}

@inproceedings{sinha2018certifying,
  title={Certifying Some Distributional Robustness with Principled Adversarial Training},
  author={Sinha, Aman and Namkoong, Hongseok and Duchi, John},
  booktitle={International Conference on Learning Representations},
  year={2018}
}

@inproceedings{wang2021tent,
  title={Tent: Fully Test-Time Adaptation by Entropy Minimization},
  author={Wang, Dequan and Shelhamer, Evan and Liu, Shaoteng and Olshausen, Bruno and Darrell, Trevor},
  booktitle={International Conference on Learning Representations},
  year={2021}
}

@inproceedings{sun2020test,
  title={Test-time training with self-supervision for generalization under distribution shifts},
  author={Sun, Yu and Wang, Xiaolong and Liu, Zhuang and Miller, John and Efros, Alexei and Hardt, Moritz},
  booktitle={International conference on machine learning},
  pages={9229--9248},
  year={2020},
  organization={PMLR}
}

@article{zhou2021domain,
  title={Domain adaptive ensemble learning},
  author={Zhou, Kaiyang and Yang, Yongxin and Qiao, Yu and Xiang, Tao},
  journal={IEEE Transactions on Image Processing},
  volume={30},
  pages={8008--8018},
  year={2021},
  publisher={IEEE}
}

@inproceedings{swaroop2025distributionally,
  title={Distributionally Robust Feature Selection},
  author={Swaroop, Maitreyi and Krishnamurti, Tamar and Wilder, Bryan},
  booktitle={Advances in Neural Information Processing Systems},
  year={2025}
}

@inproceedings{liu2014safe,
  title     = {Safe Screening with Variational Inequalities and Its Application to Lasso},
  author    = {Liu, Jun and Zhao, Zheng and Wang, Jie and Ye, Jieping},
  booktitle = {Proceedings of the 31st International Conference on Machine Learning (ICML)},
  year      = {2014},
  pages     = {179--187},
  publisher = {PMLR}
}

@inproceedings{wang2014safe,
  title     = {A Safe Screening Rule for Sparse Logistic Regression},
  author    = {Wang, Jie and Zhou, Jiayu and Liu, Jun and Wonka, Peter and Ye, Jieping},
  booktitle = {Advances in Neural Information Processing Systems (NeurIPS)},
  year      = {2014},
  volume    = {27}
}

@article{xiang2011screening,
  title   = {Screening Tests for Lasso Problems},
  author  = {Xiang, Zhen James and Wang, Yun and Ramadge, Peter J.},
  journal = {IEEE Transactions on Pattern Analysis and Machine Intelligence},
  year    = {2011},
  volume  = {35},
  number  = {8},
  pages   = {1905--1918}
}

@article{elghaoui2012safe,
  title={Safe Feature Elimination for the Lasso and Sparse Supervised Learning Problems},
  author={El Ghaoui, Laurent and Viallon, Vivien and Rabbani, Tarek},
  journal={Pacific Journal of Optimization},
  volume={8},
  number={4},
  pages={667--698},
  year={2012}
}

@article{wang2013lasso,
  title={Lasso screening rules via dual polytope projection},
  author={Wang, Jie and Zhou, Jiayu and Wonka, Peter and Ye, Jieping},
  journal={Advances in neural information processing systems},
  volume={26},
  year={2013}
}

@article{fercoq2015mind,
  title={Mind the Duality Gap: Safer Rules for the Lasso},
  author={Fercoq, Olivier and Gramfort, Alexandre and Salmon, Joseph},
  journal={International Conference on Machine Learning},
  year={2015}
}

@inproceedings{ndiaye2015gap,
  title={GAP Safe screening rules for sparse multi-task and multi-class models},
  author={Ndiaye, Eugene and Fercoq, Olivier and Gramfort, Alexandre and Salmon, Joseph},
  booktitle={Advances in Neural Information Processing Systems},
  pages={811--819},
  year={2015}
}

@article{ndiaye2017gap,
  title={Gap Safe Screening Rules for Sparsity Enforcing Penalties},
  author={Ndiaye, Eugene and Fercoq, Olivier and Gramfort, Alexandre and Salmon, Joseph},
  journal={Journal of Machine Learning Research},
  volume={18},
  number={128},
  pages={1--57},
  year={2017}
}

@inproceedings{ogawa2013safe,
  title={Safe Screening of Non-Support Vectors in Pathwise {SVM} Computation},
  author={Ogawa, Kohei and Suzuki, Yoshiki and Takeuchi, Ichiro},
  booktitle={Proceedings of the 30th International Conference on Machine Learning},
  year={2013}
}

@inproceedings{nakagawa2016safe,
  title={Safe Pattern Pruning: An Efficient Approach for Predictive Pattern Mining},
  author={Nakagawa, Kazuya and Suzumura, Shinya and Karasuyama, Masayuki and Tsuda, Koji and Takeuchi, Ichiro},
  booktitle={Proceedings of the 22nd ACM SIGKDD International Conference on Knowledge Discovery and Data Mining},
  year={2016}
}

@article{kato2023saferulefit,
  title={Safe RuleFit: Learning Optimal Sparse Rule Model by Meta Safe Screening},
  author={Kato, Hiroshi and Hanada, Hiroyuki and Takeuchi, Ichiro},
  journal={IEEE Transactions on Pattern Analysis and Machine Intelligence},
  year={2023}
}

@inproceedings{shibagaki2016simultaneous,
  title={Simultaneous Safe Screening of Features and Samples in Doubly Sparse Modeling},
  author={Shibagaki, Atsushi and Karasuyama, Masayuki and Hatano, Kohei and Takeuchi, Ichiro},
  booktitle={Proceedings of the 33rd International Conference on Machine Learning},
  year={2016}
}

@inproceedings{shibagaki2015regularization,
  title={Regularization Path of Cross-Validation Error Lower Bounds},
  author={Shibagaki, Atsushi and Suzuki, Yoshiki and Karasuyama, Masayuki and Takeuchi, Ichiro},
  booktitle={Advances in Neural Information Processing Systems},
  year={2015}
}

@inproceedings{okumura2015quick,
  title={Quick Sensitivity Analysis for Incremental Data Modification and Its Application to Leave-One-Out Cross-Validation in Linear Classification Problems},
  author={Okumura, Shota and Suzuki, Yoshiki and Takeuchi, Ichiro},
  booktitle={Proceedings of the 21st ACM SIGKDD International Conference on Knowledge Discovery and Data Mining},
  year={2015}
}

@inproceedings{hanada2018efficiently,
  title={Efficiently Evaluating Small Data Modification Effect for Large-Scale Classification in Changing Environment},
  author={Hanada, Hiroyuki and Shibagaki, Atsushi and Sakuma, Jun and Takeuchi, Ichiro},
  booktitle={Proceedings of the 32nd AAAI Conference on Artificial Intelligence},
  year={2018}
}

@inproceedings{ndiaye2019approximatehomotopy,
  title={Computing Full Conformal Prediction Sets with Approximate Homotopy},
  author={Ndiaye, Eugene and Takeuchi, Ichiro},
  booktitle={Advances in Neural Information Processing Systems},
  year={2019}
}

@article{shimodaira2000improving,
  title={Improving Predictive Inference under Covariate Shift by Weighting the Log-Likelihood Function},
  author={Shimodaira, Hidetoshi},
  journal={Journal of Statistical Planning and Inference},
  volume={90},
  number={2},
  pages={227--244},
  year={2000}
}

@article{sugiyama2007covariate,
  title={Covariate Shift Adaptation by Importance Weighted Cross Validation},
  author={Sugiyama, Masashi and Krauledat, Matthias and M{\"u}ller, Klaus-Robert},
  journal={Journal of Machine Learning Research},
  volume={8},
  pages={985--1005},
  year={2007}
}

@article{kouw2019review,
  title={A Review of Domain Adaptation without Target Labels},
  author={Kouw, Wouter M. and Loog, Marco},
  journal={IEEE Transactions on Pattern Analysis and Machine Intelligence},
  volume={43},
  number={3},
  pages={766--785},
  year={2019}
}

@article{scikit-learn,
  title={Scikit-learn: Machine Learning in {P}ython},
  author={Pedregosa, F. and Varoquaux, G. and Gramfort, A. and Michel, V.
          and Thirion, B. and Grisel, O. and Blondel, M. and Prettenhofer, P.
          and Weiss, R. and Dubourg, V. and Vanderplas, J. and Passos, A. and
          Cournapeau, D. and Brucher, M. and Perrot, M. and Duchesnay, E.},
  journal={Journal of Machine Learning Research},
  volume={12},
  pages={2825--2830},
  year={2011}
}

@article{liblinear,
 author = {Fan, Rong-En and Chang, Kai-Wei and Hsieh, Cho-Jui and Wang, Xiang-Rui and Lin, Chih-Jen},
 journal = {Journal of Machine Learning Research},
 pages = {1871--1874},
 title = {{LIBLINEAR}: A library for large linear classification},
 volume = {9},
 year = {2008}
 }

@article{libsvmDataset,
  title={LIBSVM: A library for support vector machines},
  author={Chang, Chih-Chung and Lin, Chih-Jen},
  journal={ACM Transactions on Intelligent Systems and Technology (TIST)},
  volume={2},
  number={3},
  pages={27},
  year={2011},
  publisher={ACM},
  note={Datasets are provided in authors' website: \url{https://www.csie.ntu.edu.tw/~cjlin/libsvmtools/datasets/}.}
}

@misc{Dua2017UCI,
author = "Dheeru, Dua and Karra Taniskidou, Efi",
year = "2017",
title = "{UCI} Machine Learning Repository",
url = "http://archive.ics.uci.edu/",
institution = "University of California, Irvine, School of Information and Computer Sciences"
}

@book{hiriart1993convex,
  title={Convex Analysis and Minimization Algorithms II: Advanced Theory and Bundle Methods},
  author={Hiriart-Urruty, Jean-Baptiste and Lemar{\'e}chal, Claude},
  publisher={Springer},
  year={1993}
}

@book{rockafellar1970convex,
  title={Convex analysis},
  author={Rockafellar, Ralph Tyrell},
  year={1970},
  publisher={Princeton university press},
}

@article{friedman2010regularization,
 author = {J. Friedman and T. Hastie and R. Tibshirani},
 journal = {Journal of Statistical Software},
 number = {1},
 pages = {1--22},
 title = {Regularization Paths for Generalized Linear Models via Coordinate Descent},
 volume = {33},
 year = {2010},
 }
